\let\captiontemp\@makecaption\makeatother
\let\@makecaption\captiontemp\makeatother
\ifcvprfinal\pagestyle{empty}\fi
\DeclareMathOperator{\shortsin}{s}
\DeclareMathOperator{\shortcos}{c}
\newcommand{\bR}{\mathbf{R}}
\newcommand{\bt}{\mathbf{t}}
\newcommand{\br}{\mathbf{r}}
\newcommand{\bx}{\mathbf{x}}
\newcommand{\by}{\mathbf{y}}
\newcommand{\bp}{\mathbf{p}}
\newcommand{\ubar}[1]{\underaccent{\bar}{#1}}
\let\OldStatex\Statex
\renewcommand{\Statex}[1][3]{
  \setlength\@tempdima{\algorithmicindent}
  \OldStatex\hskip\dimexpr#1\@tempdima\relax}
\newtheorem{theorem}{Theorem}
\newtheorem{lemma}{Lemma}
\newtheorem{corollary}{Corollary}
\begin{document}

\setlength{\textfloatsep}{6.0pt plus 2.0pt minus 2.0pt}
\setlength{\floatsep}{6.0pt plus 2.0pt minus 2.0pt}

\title{GOGMA: Globally-Optimal Gaussian Mixture Alignment}

\author{Dylan Campbell and Lars Petersson\\
Australian National University\,\,\,\,\,\,\,\,\,\,National ICT Australia (NICTA)%
\thanks{\tiny NICTA is funded by the Australian Government through the Department of Communications and the Australian Research Council through the ICT Centre of Excellence Program.}\\
{\tt\small \{dylan.campbell,lars.petersson\}@anu.edu.au}
}

\maketitle

\begin{abstract}
\vspace{-6pt}

Gaussian mixture alignment is a family of approaches that are frequently used for robustly solving the point-set registration problem. However, since they use local optimisation, they are susceptible to local minima and can only guarantee local optimality. Consequently, their accuracy is strongly dependent on the quality of the initialisation. This paper presents the first globally-optimal solution to the 3D rigid Gaussian mixture alignment problem under the $L_2$ distance between mixtures. The algorithm, named GOGMA, employs a branch-and-bound approach to search the space of 3D rigid motions $SE(3)$, guaranteeing global optimality regardless of the initialisation. The geometry of $SE(3)$ was used to find novel upper and lower bounds for the objective function and local optimisation was integrated into the scheme to accelerate convergence without voiding the optimality guarantee. The evaluation empirically supported the optimality proof and showed that the method performed much more robustly on two challenging datasets than an existing globally-optimal registration solution.

\end{abstract}

\vspace{-6pt}
\vspace{-6pt}
\section{Introduction}
\label{sec:introduction}

Gaussian Mixture Alignment (GMA), the problem of finding the transformation that best aligns one Gaussian mixture with another, has been investigated extensively in computer vision and robotics. It has a natural application to point-set registration, which endeavours to solve the same problem as GMA for discrete point-sets in $\mathbb{R}^n$. Indeed, the Iterative Closest Point (ICP) algorithm~\cite{besl1992method, zhang1994iterative} and several other local registration algorithms~\cite{chui2000new, chui2000feature, tsin2004correlation, myronenko2010point}
can be interpreted as special cases of GMA~\cite{jian2011robust}.
Applications include merging multiple partial scans into a complete model~\cite{blais1995registering,huber2003fully}; using registration results as fitness scores for object recognition~\cite{johnson1999using,belongie2002shape}; registering a view into a global coordinate system for sensor localisation~\cite{nuchter20076d,pomerleau2013comparing}; and finding relative poses between sensors~\cite{yang2013single,geiger2012automatic}.

The dominant solution for 2D and 3D rigid registration is the ICP algorithm~\cite{besl1992method, zhang1994iterative} and variants, due to its conceptual simplicity, ease of use and good performance. However, ICP is limited by its assumption that closest point pairs should correspond, which fails when the point-sets are not coarsely aligned, or the moving `model' point-set is not a proper subset of the static `scene' point-set. The former means that, without a good initialisation, ICP is very susceptible to local minima, producing erroneous estimates without a reliable means of detecting failure. The latter regularly occurs, since differing sensor viewpoints and dynamic objects lead to occlusion and partial-overlap.

Gaussian mixture alignment~\cite{chui2000feature, tsin2004correlation, jian2011robust, campbell2015adaptive} was introduced to address these shortcomings. By aligning point-sets without establishing explicit point correspondences, GMA is less sensitive to missing correspondences from partial overlap or occlusion and mitigate the problem of local minima by widening the basin of convergence. Robust objective functions can also be applied, such as the $L_2$ distance between mixtures~\cite{jian2011robust, campbell2015adaptive}. However, GMA still requires a good initialisation and cannot guarantee global optimality.

Go-ICP~\cite{yang2013goicp,yang2016goicp} was the first globally-optimal algorithm for the 3D rigid registration problem defined by ICP. Specifically, it used a branch-and-bound approach to find the global minimum of the ICP error metric, the $L_2$ norm of closest-point residuals. Despite solving the problem of local minima, Go-ICP inherits the non-robust ICP cost function that is susceptible to occlusion and partial overlap. Yang \etal~\cite{yang2013goicp} proposed a trimming strategy to handle outlier correspondences. However, this increased the runtime and required an additional trimming parameter to be set.

This work is the first to propose a globally-optimal solution to the 3D Gaussian mixture alignment problem under Euclidean (rigid) transformations. It inherits the robust $L_2$ density distance objective function of $L_2$ GMA while avoiding the problem of local minima. The method, named GOGMA, employs the branch-and-bound algorithm to guarantee global optimality regardless of initialisation, using a parametrisation of $SE(3)$ space that facilitates branching. The pivotal contribution is the derivation of the objective function bounds using the geometry of $SE(3)$. In addition, local GMA optimisation is applied whenever the algorithm finds a better transformation, to accelerate convergence without voiding the optimality guarantee.

\section{Related Work}
\label{sec:related_work}

The large quantity of work published on ICP, its variants and other registration techniques precludes a comprehensive list. The reader is directed to the surveys on ICP variants~\cite{rusinkiewicz2001efficient,pomerleau2013comparing} and recent 3D point-set and mesh registration techniques~\cite{tam2013registration} for additional background.
To improve the robustness of ICP to occlusion and partial overlap, approaches included trimming~\cite{chetverikov2005robust} and outlier rejection~\cite{zhang1994iterative,granger2002multi}.
To enlarge ICP's basin of convergence, approaches included LM-ICP~\cite{fitzgibbon2003robust}, which used the Levenberg-Marquardt algorithm~\cite{more1978levenberg} and a distance transform to optimise the ICP error, without establishing explicit point correspondences.
The family of Gaussian mixture alignment approaches also sought to improve robustness to poor initialisations, noise and outliers. Notable GMA-related algorithms for rigid and non-rigid registration include Robust Point Matching~\cite{chui2003new}, Coherent Point Drift~\cite{myronenko2010point}, Kernel Correlation~\cite{tsin2004correlation} and GMMReg~\cite{jian2011robust}. GMMReg~\cite{jian2011robust} defines an equally-weighted Gaussian at every point in the set with identical and isotropic covariances and minimises the robust $L_2$ distance between densities.
The Normal Distributions Transform (NDT) algorithm~\cite{magnusson2007scan, stoyanov2012fast} defines Gaussians for every cell in a grid and estimates full data-driven covariances.
SVR~\cite{campbell2015adaptive} uses an SVM to construct a Gaussian mixture with non-uniform weights that adapts to the structure of the point-set and is robust to occlusion and partial overlap. While more robust than ICP, these methods all employ local optimisation, which is dependent on the initial pose.

There are many heuristic or stochastic methods for global alignment that are not guaranteed to converge. One class utilises stochastic optimisation techniques, such as particle filtering~\cite{sandhu2010point}, genetic algorithms~\cite{silva2005precision,robertson2002parallel} and simulated annealing \cite{blais1995registering, papazov2011stochastic}. Another class is feature-based alignment, which exploits the transformation invariance of a local descriptor to build sparse feature correspondences, such as fast point feature histograms~\cite{rusu2009fast}. The transformation can be found from the correspondences using random sampling~\cite{rusu2009fast}, greedy algorithms~\cite{johnson1999using}, Hough transforms~\cite{woodford2014demisting} or branch-and-bound~\cite{gelfand2005robust,bazin2012globally}.
\textsc{Super 4PCS}~\cite{mellado2014super} is a recent example of a method that uses random sampling without features. It is a four-points congruent sets method that exploits a clever data structure to achieve linear-time performance, extending the original 4PCS algorithm~\cite{aiger20084}.

In contrast, globally-optimal techniques avoid local minima by searching the entire transformation space, often using the branch-and-bound paradigm. Existing 3D methods~\cite{li20073d, olsson2009branch, bustos2014fast, yang2013goicp} are often very slow or make restrictive assumptions about the point-sets, correspondences or transformations. For example, Li and Hartley~\cite{li20073d} minimised a Lipschitzized $L_2$ error function using branch-and-bound, but assumed that the point-sets were the same size and the transformation was pure rotation. Olsson \etal~\cite{olsson2009branch} found optimal solutions to point-to-point/line/plane registration using branch-and-bound and bilinear relaxation of rotation quaternions, but assumed correspondences were known. Recently, Bustos \etal~\cite{bustos2014fast} achieved efficient run-times using stereographic projection techniques for optimal 3D alignment, but assumed that translation was known.
Finally, Yang \etal~\cite{yang2013goicp,yang2016goicp} proposed the Go-ICP algorithm, which finds the optimal solution to the closest point $L_2$ error between point-sets and is accelerated by using local ICP as a sub-routine.
However, it is sensitive to occlusion and partial overlap, due to its non-robust cost function. The proposed trimming strategy goes some way to alleviating this, but increases the runtime, requires an estimate of the overlap percentage and may lead to ambiguity in the solution.

The rest of the paper is organised as follows: we outline Gaussian mixture alignment in Section~\ref{sec:gma}, we develop a parametrisation of the domain of 3D motions, a branching strategy and a derivation of the bounds in Section~\ref{sec:bb}, we propose an algorithm for globally-optimal GMA in Section~\ref{sec:gogma} and we evaluate the its performance in Section~\ref{sec:experimental_results}.

\section{Gaussian Mixture Alignment}
\label{sec:gma}

The alignment of Gaussian Mixture Models (GMMs) to solve the point-set registration task is a well-studied problem \cite{chui2000feature,tsin2004correlation,magnusson2007scan,jian2011robust,campbell2015adaptive}. GMMs can be constructed from point-set data using Kernel Density Estimation (KDE) \cite{jian2011robust,detry2009probabilistic,comaniciu2003algorithm,xiong2013study}, Expectation Maximisation (EM) \cite{dempster1977maximum,deselaers2010object} or a Support Vector Machine (SVM) \cite{campbell2015adaptive}. Once the point-sets are in GMM form, the registration problem can be posed as minimising a discrepancy measure between GMMs. We use the $L_2$ distance formulation of Jian and Vemuri \cite{jian2011robust}, which can be expressed in closed-form and efficiently implemented. The $L_2 E$ estimator minimises the $L_2$ distance between densities and is inherently robust to outliers \cite{scott2001parametric}.

Let $\mathcal{G}_{\mathcal{X}} \!=\! \{ \bx_i , \sigma_{i\mathcal{X}}^2 , \phi_i^{\mathcal{X}} \}_{i = 1}^{m}$ and $\mathcal{G}_{\mathcal{Y}} \!=\! \{ \by_j , \sigma_{j\mathcal{Y}}^2 , \phi_j^{\mathcal{Y}} \}_{j = 1}^{n}$ be GMMs constructed from point-sets $\mathcal{X}$ and $\mathcal{Y}$, with means $\bx_i$ and $\by_j$, variances $\sigma_{i\mathcal{X}}^2$ and $\sigma_{j\mathcal{Y}}^2$, mixture weights $\phi_i^{\mathcal{X}}$ and $\phi_j^{\mathcal{Y}}$ and number of components $m$ and $n$ respectively. Also let $T(\mathcal{G},\bR,\bt)$ be the function that rigidly transforms $\mathcal{G}$ with rotation $\bR \in SO(3)$ and translation $\bt \in \mathbb{R}^3$. The $L_2$ distance $D$ between transformed $\mathcal{G}_{\mathcal{X}}$ and $\mathcal{G}_{\mathcal{Y}}$ is given by
\begin{equation}
\label{eqn:l2_distance}
D(\bR,\bt) = \int_{\mathbb{R}^{{}^{3}}} \left( p\left( \bp \middle| T(\mathcal{G}_{\mathcal{X}}, \bR,\bt) \right) - p\left( \bp \middle| \mathcal{G}_{\mathcal{Y}} \right) \right)^{2}\,\mathrm{d}\bp
\end{equation}
where $p\left( \bp \middle| \mathcal{G} \right)$ is the probability of observing a point $\bp$ given a mixture model $\mathcal{G} = \{ \boldsymbol{\mu}_i , \sigma_{i}^{2} , \phi_i \}_{i = 1}^{\ell}$, that is
\begin{equation}
\label{eqn:gmm_probability}
p\left( \bp \middle| \mathcal{G} \right) = \sum_{i = 1}^{\ell} \phi_{i} \mathcal{N} \left( \bp \middle| \boldsymbol{\mu}_{i} , \sigma_{i}^{2} \right)
\end{equation}
where $\mathcal{N} \left( \bp \middle| \boldsymbol{\mu} , \sigma^{2} \right)$ is the probability of the Gaussian at $\bp$.
Expanding (\ref{eqn:l2_distance}), $\left[ p\left( \bp \middle| T(\mathcal{G}_{\mathcal{X}}, \bR,\bt) \right) \right]^2$ is invariant under rigid transformations and $\left[ p\left( \bp \middle| \mathcal{G}_{\mathcal{Y}} \right) \right]^2$ is independent of $(\bR,\bt)$. The integral of the $-2 p\left( \bp \middle| T(\mathcal{G}_{\mathcal{X}}, \bR,\bt) \right) p\left( \bp \middle| \mathcal{G}_{\mathcal{Y}} \right)$ term has a closed form, derived by applying the identity
\begin{multline}
\label{eqn:gmm_identity}
\int_{\mathbb{R}^{{}^{3}}} \mathcal{N} \left( \bp \middle| \boldsymbol{\mu}_{1} , \sigma_{1}^{2} \right) \mathcal{N} \left( \bp \middle| \boldsymbol{\mu}_{2} , \sigma_{2}^{2} \right) \,\mathrm{d}\bp\\
= \mathcal{N} \left( \mathbf{0} \middle| \boldsymbol{\mu}_{1} - \boldsymbol{\mu}_{2} , \sigma_{1}^{2} + \sigma_{2}^{2} \right).
\end{multline}
Therefore, by substituting in (\ref{eqn:gmm_probability}) and (\ref{eqn:gmm_identity}), the objective function for Gaussian Mixture Alignment (GMA) is given by
\begin{align}
f\left(\bR, \bt \right) &= - \int_{\mathbb{R}^{{}^{3}}} p\left( \bp \middle| T(\mathcal{G}_{\mathcal{X}}, \bR,\bt) \right) p\left( \bp \middle| \mathcal{G}_{\mathcal{Y}} \right)\,\mathrm{d}\bp\nonumber\\
&= - \!\! \sum_{i = 1}^{m} \sum_{j = 1}^{n} \phi_{i}^{\mathcal{X}} \phi_{j}^{\mathcal{Y}} \mathcal{N} \! \left( \mathbf{0} \middle| \bR \bx_i \!+ \bt \!- \by_j, \sigma_{i\mathcal{X}}^{2} \!+ \sigma_{j\mathcal{Y}}^{2} \right)\nonumber\\
&= - \!\! \sum_{i = 1}^{m} \sum_{j = 1}^{n} \frac{\phi_{i}^{\mathcal{X}} \phi_{j}^{\mathcal{Y}}}{Z} \exp \left[ -\frac{\left[e_{ij}\left(\bR, \bt\right) \right]^2}{2 \left[ \sigma_{i\mathcal{X}}^2 + \sigma_{j\mathcal{Y}}^2 \right]} \right]
\label{eqn:objective_function}
\end{align}
where $Z$ is the normalisation factor and $e_{ij}\left(\bR, \bt\right)$ is the pairwise residual error. For local GMA, we use the quasi-Newton L-BFGS-B algorithm \cite{byrd1995limited} to minimise (\ref{eqn:objective_function}).

\section{Branch-and-Bound}
\label{sec:bb}

Branch-and-Bound (BB) \cite{land1960automatic} is a global optimisation technique that is frequently used to solve non-convex and NP-hard problems~\cite{lawler1966branch}. To apply the technique to 3D GMA, a suitable means of parametrising and branching (partitioning) the domain of 3D rigid transformations $SE(3)$ must be found, as well as an efficient way to find the upper and lower bounds of each branch. The latter is critical for the time and memory efficiency of the algorithm, since tight bounds that are quick to evaluate reduce the search space by allowing suboptimal branches to be pruned.

\subsection{Parametrising and Branching the Domain}
\label{sec:bb_parametrisation}

For the globally-optimal alignment problem, the objective function (\ref{eqn:objective_function}) must be minimised over the domain of 3D motions, that is, the group $SE(3) = SO(3) \times \mathbb{R}^3$. One minimal parametrisation of $SO(3)$ is the angle-axis representation, where a rotation is parametrised as a 3-vector $\br$ with a rotation angle $\|\br\|$ and rotation axis $\br/\|\br\|$. We use the notation $\bR_{\br} \in SO(3)$ to denote the rotation matrix obtained from the matrix exponential map of the skew-symmetric matrix $[\br]_{\times}$ induced by $\br$. The Rodrigues' rotation formula can be used to efficiently calculate this mapping. The space of all 3D rotations can therefore be represented as a solid ball of radius $\pi$ in $\mathbb{R}^3$. The mapping is one-to-one on the interior of the $\pi$-ball and two-to-one on the surface. For ease of manipulation, we use the 3D cube circumscribing the $\pi$-ball as the rotation domain~\cite{li20073d}.
The translation domain is chosen as the bounded cube $[-\tau,\tau]^3$. If the GMMs were constructed from point-sets scaled to fit within $[-0.5, 0.5]^3$, choosing $\tau = 1$ would ensure that the domain covered every feasible translation. In practice, a smaller $\tau$ is acceptable, if a minimum detectable bounding box overlap is acceptable.

Together, these domains form a 6D hypercube, shown separately in Figure~\ref{fig:domain}. During BB, the hypercube is branched using a hyperoctree data structure. The uncertainty region induced by a hypercube on a point $\bx$ is shown for rotation and translation separately in Figure~\ref{fig:uncertainty_region}. The transformed point may lie anywhere in the uncertainty region, which is the Minkowski sum of a spherical patch and a cube for rotation and translation dimensions respectively.

\begin{figure}[!t]
\centering
\begin{subfigure}[]{0.495\columnwidth}
\centering
\def\svgwidth{0.65\columnwidth}
\input{./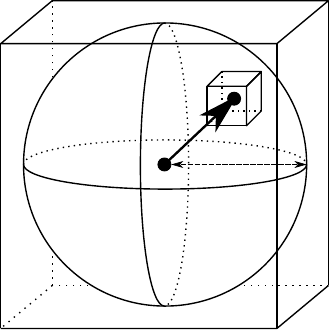_tex}
\caption{Rotation Domain}
\label{fig:domain_rotation}
\end{subfigure}
\hfill
\begin{subfigure}[]{0.495\columnwidth}
\centering
\def\svgwidth{0.65\columnwidth}
\input{./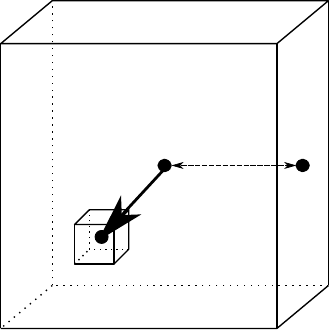_tex}
\caption{Translation Domain}
\label{fig:domain_translation}
\end{subfigure}
\caption{Parametrisation of $SE(3)$. (\subref{fig:domain_rotation})~The rotation space $SO(3)$ is parametrised by angle-axis 3-vectors within a solid radius-$\pi$ ball. (\subref{fig:domain_translation})~The translation space $\mathbb{R}^3$ is parametrised by 3-vectors within a cube of half side-length $\tau$. The joint domain is branched using a hyperoctree data structure, with a sub-hypercube depicted as two sub-cubes in the rotation and translation dimensions.}
\label{fig:domain}
\end{figure}

\begin{figure}[!t]
\centering
\begin{subfigure}[]{0.495\columnwidth}
\def\svgwidth{\columnwidth}
\input{./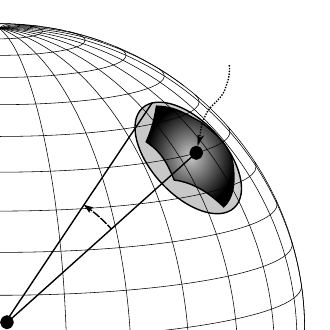_tex}
\caption{Rotation Uncertainty Region}
\label{fig:uncertainty_region_rotation}
\end{subfigure}
\hfill
\begin{subfigure}[]{0.495\columnwidth}
\def\svgwidth{\columnwidth}
\input{./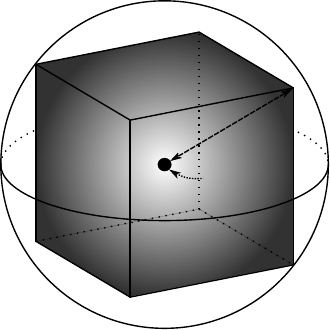_tex}
\caption{Translation Uncertainty Region}
\label{fig:uncertainty_region_translation}
\end{subfigure}
\caption{Uncertainty region induced by hypercube $C = C_r \times C_t$. (\subref{fig:uncertainty_region_rotation})~Rotation uncertainty region for $C_r$ with centre $\bR_{\br_{0}}\bx$. The optimal rotation of $\bx$ may be anywhere within the heavily-shaded umbrella-shaped uncertainty region, which is entirely contained by the lightly-shaded spherical cap defined by $\bR_{\br_{0}}\bx$ and $\beta$. (\subref{fig:uncertainty_region_translation})~Translation uncertainty region for $C_t$ with centre $\bx + \bt_{0}$. The optimal translation of $\bx$ may be anywhere within the cube, which is entirely contained by the circumscribed sphere with radius $\rho$.
}
\label{fig:uncertainty_region}
\end{figure}

\subsection{Bounding the Branches}
\label{sec:bb_bounds}

The success of a BB algorithm is predicated on the quality of its bounds. For rigid 3D Gaussian mixture alignment, the GMA objective function (\ref{eqn:objective_function}) within a transformation domain $C_r\times C_t$ needs to be bounded.
To simplify the bounds, the GMMs are assumed to have isotropic covariances.
The translation component $\bt \in C_t$ can be bounded by a single value $\rho$ by observing that the translation cube can be inscribed in a sphere (Figure~\ref{fig:uncertainty_region_translation}), as in \cite{yang2013goicp}.
\begin{lemma}
\label{lm:translation_uncertainty}
(Translation sphere radius) Given a 3D point $\bx$ and a translation cube $C_t$ of half side-length $\delta_t$ centred at $\bt_0$, then $\forall \bt \in C_t$,
\begin{equation}
\label{eqn:translation_uncertainty}
\|(\bx + \bt) - (\bx + \bt_0)\| \leqslant \sqrt{3} \delta_t \doteq \rho.
\end{equation}
\end{lemma}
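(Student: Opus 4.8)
The plan is to reduce the statement to a bound on the displacement $\bt - \bt_0$ and then exploit the cube geometry coordinate-wise. First I would observe that the point $\bx$ plays no role: expanding the left-hand side gives $\|(\bx + \bt) - (\bx + \bt_0)\| = \|\bt - \bt_0\|$, since the $\bx$ terms cancel. This is the expected simplification, and it reflects the fact that a common base point does not affect the relative displacement induced by varying $\bt$ over $C_t$.

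Next I would bound $\|\bt - \bt_0\|$ using the definition of $C_t$. Writing the displacement component-wise as $\bt - \bt_0 = (d_1, d_2, d_3)^{\top}$, membership $\bt \in C_t$ means precisely that $|d_k| \leqslant \delta_t$ for each $k = 1, 2, 3$, because the cube of half side-length $\delta_t$ centred at $\bt_0$ is the set $\{\bt : \|\bt - \bt_0\|_{\infty} \leqslant \delta_t\}$. Substituting these coordinate bounds into the Euclidean norm yields
\begin{equation*}
\|\bt - \bt_0\| = \sqrt{d_1^2 + d_2^2 + d_3^2} \leqslant \sqrt{\delta_t^2 + \delta_t^2 + \delta_t^2} = \sqrt{3}\,\delta_t,
\end{equation*}
which is exactly the claimed bound $\rho$.

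There is no genuine obstacle here, since the argument is a one-line application of the monotonicity of the sum of squares under the coordinate constraints; the only point worth emphasising is that the bound is tight, being attained at each of the eight corners of $C_t$ where $|d_k| = \delta_t$ for every $k$. Geometrically, $\sqrt{3}\,\delta_t$ is the centre-to-corner distance of the cube, i.e.\ the radius of the circumscribing sphere, which is consistent with Figure~\ref{fig:uncertainty_region_translation}: every feasible translated point $\bx + \bt$ lies within the ball of radius $\rho$ about $\bx + \bt_0$.
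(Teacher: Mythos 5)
Your proof is correct and follows essentially the same route as the paper's: cancel $\bx$ to reduce the claim to $\|\bt - \bt_0\| \leqslant \sqrt{3}\,\delta_t$, then bound this by the half space diagonal of the cube. Your coordinate-wise derivation merely spells out the paper's one-line appeal to the half space diagonal, and your remarks on tightness at the corners are a nice but inessential addition.
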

\begin{proof}
Inequality (\ref{eqn:translation_uncertainty}) can be derived by observing that $\|(\bx + \bt) - (\bx + \bt_0)\| = \|\bt - \bt_0\|$ and $\max \|\bt - \bt_0\| = \sqrt{3} \delta_t$ (the half space diagonal) for $\bt \in C_t$.
\end{proof}

To bound the rotation component $\br \in C_r$, Lemmas 1 and 2 from \cite{hartley2009global} are used. For reference, the relevant parts are merged into Lemma~\ref{lm:rotation_inequality}, as in \cite{yang2016goicp}. The lemma indicates that the angle between two rotated vectors is less than or equal to the Euclidean distance between their rotations' angle-axis representations in $\mathbb{R}^3$.
\begin{lemma}
\label{lm:rotation_inequality}
For an arbitrary vector $\bx$ and two rotations, represented as $\bR_{\br_{1}}$ and $\bR_{\br_{2}}$ in matrix form and $\br_{1}$ and $\br_{2}$ in angle-axis form,
\begin{equation}
\label{eqn:rotation_inequality}
\angle(\bR_{\br_{1}}\bx, \, \bR_{\br_{2}} \bx) \leqslant \|\br_{1} - \br_{2}\|.
\end{equation}
\end{lemma}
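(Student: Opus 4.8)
The plan is to separate the bound into two independent facts and then chain them. First I would remove the dependence on the first rotation: since rotations preserve inner products and hence angles between vectors, left-multiplying both arguments by $\bR_{\br_1}^{\top}$ gives
\[
\angle(\bR_{\br_1}\bx,\, \bR_{\br_2}\bx) \;=\; \angle\!\left(\bx,\, \bR_{\br_1}^{\top}\bR_{\br_2}\,\bx\right),
\]
so, writing $\bR \doteq \bR_{\br_1}^{\top}\bR_{\br_2} \in SO(3)$, it suffices to bound $\angle(\bx,\bR\bx)$ by $\|\br_1-\br_2\|$.

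For the vector-displacement part I would express $\bR$ as a rotation by some angle $\theta \in [0,\pi]$ about a unit axis $\mathbf{n}$ and split $\bx = \bx_{\parallel} + \bx_{\perp}$ into the component along $\mathbf{n}$ (which $\bR$ fixes) and the component orthogonal to $\mathbf{n}$ (which $\bR$ turns by $\theta$ in the plane normal to $\mathbf{n}$). Forming $\bx\cdot\bR\bx$ and using $\|\bR\bx\|=\|\bx\|$ then exhibits $\cos\angle(\bx,\bR\bx)$ as the convex combination of $1$ and $\cos\theta$ with weights $\|\bx_{\parallel}\|^2$ and $\|\bx_{\perp}\|^2$; since this averages $\cos\theta$ toward $1$ and $\cos$ is decreasing on $[0,\pi]$, we get $\angle(\bx,\bR\bx)\le\theta$, the rotation angle of $\bR$, with equality exactly when $\bx\perp\mathbf{n}$.

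The decisive remaining step is to show that this rotation angle, which is the bi-invariant geodesic distance between $\bR_{\br_1}$ and $\bR_{\br_2}$, satisfies $\theta \le \|\br_1-\br_2\|$. I would argue by arc length: the straight segment $\br(s)=(1-s)\br_1+s\br_2$, $s\in[0,1]$, stays inside the (convex) rotation ball and maps under the exponential map to a path $\bR_{\br(s)}$ joining the two rotations, so $\theta$ is at most the length of this path measured in the metric normalised so that geodesic distance from the identity equals rotation angle. Because $\dot\br(s)=\br_2-\br_1$ is constant, the whole argument reduces to the single inequality $\|\mathrm{d}\exp_{\br}[\mathbf{v}]\|\le\|\mathbf{v}\|$, i.e.\ that the (right-trivialised) differential of $\exp$ never expands --- and this is the main obstacle. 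I would settle it by noting that $\mathrm{ad}_{[\br]_\times}$, acting on the Lie algebra as $\mathbf{w}\mapsto\br\times\mathbf{w}$, is skew-adjoint for the bi-invariant metric and therefore normal, so the operator norm of any power series in it equals its spectral radius. Since $\mathrm{ad}_{[\br]_\times}$ has eigenvalues $0$ and $\pm i\|\br\|$, the closed form $\mathrm{d}\exp_{\br}=\sum_{k\ge0}\frac{(-1)^k}{(k+1)!}(\mathrm{ad}_{[\br]_\times})^k$ has eigenvalues $1$ and $\frac{1-e^{\mp i\|\br\|}}{\pm i\|\br\|}$, the latter of modulus $|\sin(\|\br\|/2)|/(\|\br\|/2)\le 1$; hence all eigenvalues have modulus at most $1$ and the operator norm is at most $1$. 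The path therefore has length at most $\int_0^1\|\br_2-\br_1\|\,\mathrm{d}s=\|\br_1-\br_2\|$, which gives $\theta\le\|\br_1-\br_2\|$ and, combined with the two preceding steps, the stated inequality. Since this final comparison is precisely Lemma~2 of Hartley and Kahl, a legitimate shortcut is to cite it directly rather than re-derive the non-expansion of $\mathrm{d}\exp$.
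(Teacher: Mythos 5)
Your proof is correct, but note that the paper does not actually prove this lemma: it is imported wholesale, the text stating that Lemmas 1 and 2 of Hartley and Kahl \cite{hartley2009global} are merged into Lemma~\ref{lm:rotation_inequality}, following \cite{yang2016goicp}. Your argument is in effect a self-contained reconstruction of that citation: your first two steps (rotation-invariance of angles, plus the parallel/perpendicular decomposition showing $\cos\angle(\bx,\bR\bx)$ is a convex combination of $1$ and $\cos\theta$, hence $\angle(\bx,\bR\bx)\leqslant\theta$) recover Hartley--Kahl's Lemma 1, and your third step ($\theta\leqslant\|\br_1-\br_2\|$) is exactly their Lemma 2, which you re-derive by a Lie-theoretic arc-length argument: the segment $\br(s)$ maps to a path in $SO(3)$ whose length is controlled by the non-expansiveness of $\mathrm{d}\exp$, proved via normality of $\mathrm{ad}_{[\br]_\times}$ so that the operator norm equals the spectral radius, with eigenvalue moduli $1$ and $\lvert\sin(\|\br\|/2)\rvert/(\|\br\|/2)\leqslant 1$. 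This chain is sound, since path length dominates geodesic distance and the geodesic distance in the normalised bi-invariant metric is precisely the rotation angle of $\bR_{\br_1}^{\top}\bR_{\br_2}$. Two immaterial slips: the series you write with the $(-1)^k$ factor is the left-trivialised differential rather than the right-trivialised one (both have spectra of the same modulus, so the bound is unaffected), and the claim of equality ``exactly when $\bx\perp\mathbf{n}$'' fails in the degenerate case $\theta=0$. What your route buys is self-containedness, exposing the one genuinely nontrivial fact hidden by the citation, namely that the angle-axis chart is metrically non-expansive; what the paper's shortcut (and the one you also endorse at the end) buys is brevity and a standard reference, which is all the branch-and-bound derivation that follows actually needs.
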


From this, the maximum angle $\beta$ between a vector $\bx$ rotated by $\br_0$ and $\bx$ rotated by $\br \in C_r$ can be found as follows.
\begin{lemma}
\label{lm:maximum_aperture_angle}
(Maximum aperture angle) Given a 3D point $\bx$ and a rotation cube $C_r$ of half side-length $\delta_r$ centred at $\br_0$, then $\forall \br \in C_r$,
\begin{equation}
\label{eqn:maximum_aperture_angle}
\angle(\bR_\br \bx, \bR_{\br_0} \bx) \leqslant \min(\sqrt{3}\delta_r,\pi) \doteq \beta.
\end{equation}
\end{lemma}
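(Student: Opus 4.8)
The plan is to combine Lemma~\ref{lm:rotation_inequality} with the same cube-diagonal observation used in Lemma~\ref{lm:translation_uncertainty}, and then intersect the resulting bound with the trivial fact that the angle between any two vectors cannot exceed $\pi$.

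First I would apply Lemma~\ref{lm:rotation_inequality} directly, setting $\br_1 = \br$ and $\br_2 = \br_0$, to obtain $\angle(\bR_\br \bx, \bR_{\br_0} \bx) \leqslant \|\br - \br_0\|$. This converts the angular quantity we wish to bound into a Euclidean distance in the angle-axis parameter space, where the cube geometry is easy to exploit.

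Next I would bound $\|\br - \br_0\|$ geometrically. Since $\br$ ranges over the cube $C_r$ of half side-length $\delta_r$ centred at $\br_0$, the displacement $\br - \br_0$ is constrained to $[-\delta_r, \delta_r]^3$, whose maximal Euclidean norm is attained at a corner and equals the half space diagonal $\sqrt{3}\delta_r$. This is identical to the reasoning in Lemma~\ref{lm:translation_uncertainty}. Chaining the two inequalities gives $\angle(\bR_\br \bx, \bR_{\br_0} \bx) \leqslant \sqrt{3}\delta_r$.

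Finally I would observe that the angle between any two vectors in $\mathbb{R}^3$ lies in $[0,\pi]$, so the same angle is trivially bounded by $\pi$ as well. Taking the tighter of the two bounds yields $\angle(\bR_\br \bx, \bR_{\br_0} \bx) \leqslant \min(\sqrt{3}\delta_r, \pi) = \beta$, as claimed. There is no genuine obstacle in this argument; the only point requiring care is recognising that the $\pi$ cap is essential, since $\sqrt{3}\delta_r$ can exceed $\pi$ for large cubes (for instance the root cube circumscribing the $\pi$-ball), in which case the bare distance bound would be vacuous.
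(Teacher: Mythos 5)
Your proof is correct and follows essentially the same route as the paper's: invoking Lemma~\ref{lm:rotation_inequality} to reduce the angular bound to $\|\br - \br_0\|$, bounding that by the half space diagonal $\sqrt{3}\delta_r$ as in Lemma~\ref{lm:translation_uncertainty}, and capping with the trivial $\pi$ bound. The only cosmetic difference is that the paper folds the $\min(\cdot,\pi)$ into the first inequality rather than introducing it at the end.
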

\begin{proof}
Inequality (\ref{eqn:maximum_aperture_angle}) can be derived as follows:
\begin{align}
\angle(\bR_\br \bx, \bR_{\br_0} \bx)
&\leqslant \min({\|\br-\br_0\|},{\pi})
\label{eqn:maximum_aperture_angle_1}\\
&\leqslant \min({\sqrt{3}\delta_r},{\pi})
\label{eqn:maximum_aperture_angle_2}
\end{align}
where (\ref{eqn:maximum_aperture_angle_1}) follows from Lemma~\ref{lm:rotation_inequality} and (\ref{eqn:maximum_aperture_angle_2}) follows from $\max \|\br - \br_0\| = \sqrt{3} \delta_r$ (the half space diagonal of the rotation cube) for $\br \in C_r$.
\end{proof}

As a first step towards bounding the GMA objective function, bounds for the pairwise residual error ${e}_{ij}(\bR_\br,\bt)$ need to be found. This ${e}_{ij}(\bR_\br,\bt)$ represents the minimal $L_2$ distance between the Gaussian means $(\bR_\br\bx_i + \bt)$ and $\by_j$ for $(\br,\bt) \in (C_r\times C_t)$. For convenience, let $\by_{j}^{\prime} = (\by_j - \bt_0)$.

\begin{theorem}
\label{thm:bounds_pairwise}
(Bounds of the pairwise residual error)
For the 3D transformation domain $C_r\times C_t$ centred at $(\br_0,\bt_0)$ with translation sphere radius $\rho$, the upper bound $\bar{e}_{ij}$ and the lower bound $\ubar{e}_{ij}$ of the optimal pairwise residual error $e_{ij}(\bR_\br,\bt)$ for $\bx_i$ and $\by_j$ can be chosen as
\begin{align}
\bar{e}_{ij}(\bR_\br,\bt) &= e_{ij}(\bR_{\br_0},\bt_0)
\label{eqn:pairwise_upper_bound}\\
\ubar{e}_{ij}(\bR_\br,\bt) &= \max \Big( \big\lvert \| \bx_{i}\| - \| \by_{j} - \bt_{0} \| \big\rvert - \rho, \, 0 \Big).
\label{eqn:pairwise_lower_bound}
\end{align}
\end{theorem}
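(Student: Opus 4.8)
The plan is to treat the two bounds separately, since they are of fundamentally different character. Writing the pairwise residual explicitly as $e_{ij}(\bR_\br,\bt) = \|\bR_\br\bx_i + \bt - \by_j\|$, the upper bound (\ref{eqn:pairwise_upper_bound}) is essentially free: the centre $(\br_0,\bt_0)$ is itself a feasible member of the domain $C_r\times C_t$, so the smallest residual attainable over the domain can be no larger than the residual realised at that one point. This is exactly the standard branch-and-bound device of evaluating at a concrete feasible transformation to certify an upper bound, and it requires no inequality manipulation beyond noting $(\br_0,\bt_0)\in C_r\times C_t$.

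The lower bound (\ref{eqn:pairwise_lower_bound}) is where the real work lies, and here the aim is a bound that holds uniformly for every $(\br,\bt)\in C_r\times C_t$. First I would shift by the translation centre, introducing $\by_j^\prime = \by_j - \bt_0$ so that $e_{ij}(\bR_\br,\bt) = \|(\bR_\br\bx_i - \by_j^\prime) + (\bt - \bt_0)\|$. Applying the triangle inequality to peel off the translation displacement gives $e_{ij}(\bR_\br,\bt) \geqslant \|\bR_\br\bx_i - \by_j^\prime\| - \|\bt - \bt_0\|$, and Lemma~\ref{lm:translation_uncertainty} then replaces $\|\bt-\bt_0\|$ by its worst case $\rho$. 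The remaining rotation-dependent term I would attack with the reverse triangle inequality, $\|\bR_\br\bx_i - \by_j^\prime\| \geqslant \big\lvert \|\bR_\br\bx_i\| - \|\by_j^\prime\| \big\rvert$, and then use the crucial fact that a rotation is an isometry, so $\|\bR_\br\bx_i\| = \|\bx_i\|$ independently of $\br$. This collapses all rotational freedom and yields $e_{ij}(\bR_\br,\bt) \geqslant \big\lvert \|\bx_i\| - \|\by_j - \bt_0\| \big\rvert - \rho$; since a norm is nonnegative, taking the maximum of this quantity with $0$ produces the stated bound.

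It is worth highlighting that this lower bound never invokes the rotation uncertainty angle $\beta$ of Lemma~\ref{lm:maximum_aperture_angle}; the norm-preservation property of $SO(3)$ does all the work, which is precisely what makes the bound cheap to evaluate. The main thing to be careful about is conceptual rather than computational: the upper bound is a bound on the \emph{optimal} (minimal) residual attained somewhere in the domain, whereas the lower bound must be valid pointwise for \emph{all} transformations in the domain. Keeping these two roles straight—and applying the triangle and reverse-triangle inequalities in the correct order, so that the translation is removed before the rotation norm is simplified—is the only real subtlety; the rest is routine. These residual bounds would then feed into a subsequent bound on the full objective (\ref{eqn:objective_function}) via the monotonicity of $\exp(-e^2/2\sigma^2)$ in $e$.
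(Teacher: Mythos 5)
Your proof is correct and follows essentially the same route as the paper's: the upper bound is certified by evaluating at the feasible centre $(\br_0,\bt_0)$, and the lower bound is obtained by shifting by $\bt_0$, peeling off $\|\bt-\bt_0\|\leqslant\rho$ via Lemma~\ref{lm:translation_uncertainty}, applying the reverse triangle inequality to $\|\bR_\br\bx_i-\by_j^{\prime}\|$, and collapsing the rotation by isometry before taking the maximum with $0$. The only cosmetic difference is that the paper introduces the absolute value and the $\max(\cdot,0)$ in the middle of the chain rather than at the end; your observation that the bound ignores the rotation extent $\beta$ (and is therefore non-converging, motivating Theorem~\ref{thm:pairwise_lower_bound_improved}) matches the paper's own remark.
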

\begin{proof}
\label{prf:bounds_pairwise}
The validity of the upper bound $\bar{e}_{ij}$ follows from the error $e_{ij}$ at a specific point within the domain being larger than the minimal error within the domain, that is
\begin{equation}
\label{eqn:pairwise_upper_bound_proof}
e_{ij}(\bR_{\br_0},\bt_0) \geqslant \min_{\forall (\br,\bt)\in (C_r\times C_t)} e_{ij}(\bR_{\br},\bt).
\end{equation}
The validity of the lower bound $\ubar{e}_{ij}$ is proved as follows. Observe that $\forall (\br,\bt)\in (C_r\times C_t)$,
\begin{align}
e_{ij}(\bR_{\br},\bt) &= \| \bR_{\br} \bx_i + \bt - \by_j \|
\label{eqn:pairwise_lower_bound_proof_1}\\
&= \| \bR_{\br} \bx_i - (\by_j - \bt_0) - (\bt_0 - \bt) \|
\label{eqn:pairwise_lower_bound_proof_2}\\
&\geqslant \big\lvert \| \bR_{\br} \bx_i - \by_j^{\prime} \| - \| (\bt_0 - \bt) \| \big\rvert
\label{eqn:pairwise_lower_bound_proof_3}\\
&\geqslant \max \big( \| \bR_{\br} \bx_i - \by_j^{\prime} \| - \| (\bt_0 - \bt) \| , \, 0 \big)
\label{eqn:pairwise_lower_bound_proof_4}\\
&\geqslant \max \big( \| \bR_{\br} \bx_i - \by_j^{\prime} \| - \rho , \, 0 \big)
\label{eqn:pairwise_lower_bound_proof_5}\\
&\geqslant \max \big( \big\lvert \| \bR_{\br} \bx_i \| - \| \by_j^{\prime}\| \big\rvert - \rho , \, 0 \big)
\label{eqn:pairwise_lower_bound_proof_6}\\
&= \max \big( \big\lvert \| \bx_i \| - \| \by_j - \bt_0 \| \big\rvert - \rho , \, 0 \big)
\label{eqn:pairwise_lower_bound_proof_7}
\end{align}
where (\ref{eqn:pairwise_lower_bound_proof_2}) adds and subtracts $\bt_0$, (\ref{eqn:pairwise_lower_bound_proof_3}) follows from the reverse triangle inequality $\|\bx - \by\| \geqslant \lvert \|\bx\| - \|\by\| \rvert$, (\ref{eqn:pairwise_lower_bound_proof_4}) from the absolute value of a quantity being positive, (\ref{eqn:pairwise_lower_bound_proof_5}) follows from (\ref{eqn:translation_uncertainty}), and (\ref{eqn:pairwise_lower_bound_proof_6}) from the reverse triangle inequality.
\end{proof}

The geometric intuition of Theorem~\ref{thm:bounds_pairwise} is that all valid points $(\bR_{\br} \bx_i + \bt)$ lie within $\rho$ of the rotation sphere centred at $\bt_0$ with radius $\|\bx_i\|$.
However, the gap between this `sphere distance' pairwise lower bound and the pairwise upper bound does not converge to zero as the sub-cube sizes decrease, since the lower bound is independent of the rotation sub-cube size $\delta_r$.
We can find a converging lower bound by observing that all valid points, neglecting translation uncertainty, lie on a spherical cap. Letting $\bx_{i}^{0} = \bR_{\br_{0}}\bx_i$, the spherical cap is defined by the sphere of radius $\|\bx_i\|$ centred at $\bt_0$ with the constraint that $\angle \left(\bx_{i}^{0}, \bx\right) \leqslant \beta$ for all points $(\bx + \bt_0)$ on the cap.
Now let $\bx_{i}^{\prime} = \bR\bx_i$ be an arbitrary point on the origin-centred spherical cap and let $\bx_{ij}^{*} = \bR\bx_i$ be the point coplanar with $\bx_{i}^{0}$ and $\by_{j}^{\prime}$ on the edge of the spherical cap, such that $\angle \left( \bx_{i}^{0}, \bx_{ij}^{*} \right) = \beta$.

\begin{theorem}
\label{thm:pairwise_lower_bound_improved}
(Tight lower bound of the pairwise residual error)
For the 3D transformation domain $C_r\times C_t$ centred at $(\br_0,\bt_0)$ with translation sphere radius $\rho$, the lower bound $\ubar{e}_{ij}$ of the optimal pairwise residual error $e_{ij}(\bR_\br,\bt)$ for $\bx_i$ and $\by_j$ can be chosen as
\begin{equation}
\label{eqn:pairwise_lower_bound_improved}
\ubar{e}_{ij}(\bR_\br,\bt) \!=\!
\begin{dcases}
\!\max\! \Big[ \big\lvert \| \bx_{i}\| \!-\! \| \by_{j}^{\prime} \| \big\rvert \!-\! \rho, 0 \Big] \!\!\!\!& \text{for } \alpha \!\leqslant\! \beta\\
\!\max\! \Big[ \| \bx_{ij}^* \!-\! \by_j^{\prime} \| \!-\! \rho, 0 \Big] \!\!\!\!& \text{for } \alpha \!>\! \beta
\end{dcases}
\end{equation}
where $\alpha$ and $\beta$ are shown in Figure~\ref{fig:bounds} and are given by
\begin{align}
\alpha &= \angle \left( \bx_i^0, \by_j^{\prime} \right)
\label{eqn:alpha}\\
\beta &= \angle \left( \bx_i^0, \bx_{ij}^* \right) = \min(\sqrt{3}\delta_r,\pi).
\label{eqn:beta}
\end{align}
\end{theorem}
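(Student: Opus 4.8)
The plan is to reduce the problem to computing the minimum Euclidean distance from the fixed point $\by_j^{\prime}$ to the origin-centred spherical cap on which every feasible $\bR_{\br}\bx_i$ must lie, and then to absorb the translation uncertainty exactly as in the proof of Theorem~\ref{thm:bounds_pairwise}. First I would pin down the feasible set: since rotation preserves norms, $\|\bR_{\br}\bx_i\| = \|\bx_i\|$ for all $\br \in C_r$, and by Lemma~\ref{lm:maximum_aperture_angle} we have $\angle(\bR_{\br}\bx_i, \bx_i^0) \leqslant \beta$. Hence every $\bR_{\br}\bx_i$ lies on the spherical cap of radius $\|\bx_i\|$ subtended by half-angle $\beta$ about the axis $\bx_i^0$. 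Reusing the chain (\ref{eqn:pairwise_lower_bound_proof_2})--(\ref{eqn:pairwise_lower_bound_proof_5}) — add and subtract $\bt_0$, apply the reverse triangle inequality, take the maximum with zero, and bound $\|\bt_0 - \bt\| \leqslant \rho$ via Lemma~\ref{lm:translation_uncertainty} — I would obtain
\begin{equation*}
e_{ij}(\bR_{\br},\bt) \geqslant \max\Big( \|\bR_{\br}\bx_i - \by_j^{\prime}\| - \rho, \, 0 \Big),
\end{equation*}
so it suffices to minimise $\|\bR_{\br}\bx_i - \by_j^{\prime}\|$ over the cap.

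Next I would carry out this constrained minimisation geometrically in two stages. Fixing the polar angle $\theta = \angle(\bx_i^0, \bR_{\br}\bx_i) \in [0,\beta]$, the squared distance $\|\bR_{\br}\bx_i - \by_j^{\prime}\|^2 = \|\bx_i\|^2 + \|\by_j^{\prime}\|^2 - 2\,\bR_{\br}\bx_i \cdot \by_j^{\prime}$ is smallest when the inner product $\bR_{\br}\bx_i \cdot \by_j^{\prime}$ is largest. Parametrising the sphere with $\bx_i^0$ as the pole, this inner product is maximised when $\bR_{\br}\bx_i$ lies in the half-plane containing $\by_j^{\prime}$, i.e.\ when $\bx_i^0$, $\bR_{\br}\bx_i$ and $\by_j^{\prime}$ are coplanar. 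The problem thus collapses to a planar one: minimise, over $\theta \in [0,\beta]$, the law-of-cosines expression $d^2(\theta) = \|\bx_i\|^2 + \|\by_j^{\prime}\|^2 - 2\|\bx_i\|\,\|\by_j^{\prime}\|\cos(\alpha - \theta)$, where $\alpha = \angle(\bx_i^0, \by_j^{\prime})$.

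Finally I would analyse $d^2(\theta)$, which is decreasing on $[0,\alpha]$ and increasing on $[\alpha,\pi]$. When $\alpha \leqslant \beta$ the unconstrained minimiser $\theta = \alpha$ is feasible, giving $d = \big\lvert \|\bx_i\| - \|\by_j^{\prime}\| \big\rvert$ at the radial projection, which matches the first case. When $\alpha > \beta$ the objective is still decreasing at the right endpoint, so the constrained minimiser is $\theta = \beta$; the corresponding coplanar rim point on the $\by_j^{\prime}$ side is precisely $\bx_{ij}^*$, giving $d = \|\bx_{ij}^* - \by_j^{\prime}\|$ and the second case. Substituting both minima into the displayed bound yields (\ref{eqn:pairwise_lower_bound_improved}). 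The main obstacle is the azimuthal reduction: rigorously justifying that the closest cap point is coplanar with $\bx_i^0$ and $\by_j^{\prime}$, so that restricting to the single rim point $\bx_{ij}^*$ is lossless when $\alpha > \beta$. This is exactly the step that upgrades the crude sphere-distance bound of Theorem~\ref{thm:bounds_pairwise} to a tight, $\delta_r$-dependent bound that converges to the true minimal error as the rotation sub-cube shrinks.
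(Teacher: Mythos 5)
Your proposal is correct and follows essentially the same route as the paper: the paper likewise transfers the translation-absorbed bound $\max(\|\bR_{\br}\bx_i - \by_j^{\prime}\| - \rho, 0)$ from the proof of Theorem~\ref{thm:bounds_pairwise}, relaxes the feasible set to the full spherical cap, and then evaluates the cap distance (its Theorem~\ref{thm:spherical_cap_distance}) by first minimising over the azimuthal angle to reduce to the coplanar case and then over the polar angle, splitting into the cases $\alpha \leqslant \beta$ and $\alpha > \beta$. The only cosmetic difference is that the paper justifies the azimuthal (coplanarity) reduction by an explicit Rodrigues'-formula computation of $\|\bx^{\prime} - \by^{\prime}\|^2$, whereas you argue it via maximising the inner product $\bR_{\br}\bx_i \cdot \by_j^{\prime}$ over each polar circle, which is the same observation.
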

\begin{proof}
\label{prf:pairwise_lower_bound_improved}
Observe that $\forall (\br,\bt)\in (C_r\times C_t)$,
\begin{align}
e_{ij}(\bR_{\br},\bt)
&\geqslant \max \big( \| \bR_{\br} \bx_i - \by_j^{\prime} \| - \rho , \, 0 \big)
\label{eqn:pairwise_lower_bound_improved_proof_1}\\
&\geqslant \max \big( \min \| \bR_{\br} \bx_i - \by_j^{\prime} \| - \rho , \, 0 \big)
\label{eqn:pairwise_lower_bound_improved_proof_2}\\
&\geqslant \max \big( \min \| \bx_i^{\prime} - \by_j^{\prime} \| - \rho , \, 0 \big)
\label{eqn:pairwise_lower_bound_improved_proof_3}\\
&= \!
\begin{dcases}
\!\max\! \Big[ \big\lvert \| \bx_{i}\| \!-\! \| \by_{j}^{\prime} \| \big\rvert \!-\! \rho, 0 \Big] \!\!\!\!\!& \text{for } \alpha \!\leqslant\! \beta\\
\!\max\! \Big[ \| \bx_{ij}^* \!-\! \by_j^{\prime} \| \!-\! \rho, 0 \Big] \!\!\!\!\!& \text{for } \alpha \!>\! \beta
\end{dcases}
\label{eqn:pairwise_lower_bound_improved_proof_4}
\end{align}
where (\ref{eqn:pairwise_lower_bound_improved_proof_1}) is transferred from (\ref{eqn:pairwise_lower_bound_proof_5}), (\ref{eqn:pairwise_lower_bound_improved_proof_3}) states that the minimum distance to a constrained point on the spherical cap is greater than or equal to the minimum distance to an unconstrained point on the cap and (\ref{eqn:pairwise_lower_bound_improved_proof_4}) is from Theorem~\ref{thm:spherical_cap_distance}.
\end{proof}

\begin{theorem}
\label{thm:spherical_cap_distance}
(Spherical cap distance)
For the spherical cap defined by the vector $\bx_i^{\prime} + \bt_0$ constrained by $\angle (\bx_i^{\prime}, \bx_i^{0}) \leqslant \beta$, the minimum distance from a point $\by_j$ to the spherical cap is given by
\begin{equation}
\label{eqn:spherical_cap_distance}
\min \! \|\bx_i^{\prime} + \bt_0 - \by_j \| \!=\!
\begin{dcases}
\!\big\lvert \| \bx_{i}\| - \| \by_{j} - \bt_{0} \| \big\rvert \!\!\!\!& \text{for } \alpha \!\leqslant\! \beta\\
\!\| \bx_{ij}^* - (\by_j - \bt_0) \| \!\!\!\!& \text{for } \alpha \!>\! \beta
\end{dcases}
\end{equation}
where $\alpha$ and $\beta$ are as per Theorem~\ref{thm:pairwise_lower_bound_improved}, shown in Figure~\ref{fig:bounds}.
\end{theorem}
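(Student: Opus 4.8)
The plan is to recast the minimisation over the spherical cap as the maximisation of a linear functional on a sphere. Writing $r = \|\bx_i\| = \|\bx_i^0\|$ (rotations preserve norm) and $\by_j^{\prime} = \by_j - \bt_0$, every point of the cap has the form $\bx_i^{\prime} + \bt_0$ with $\|\bx_i^{\prime}\| = r$, so
\[
\|\bx_i^{\prime} + \bt_0 - \by_j\|^2 = \|\bx_i^{\prime} - \by_j^{\prime}\|^2 = r^2 + \|\by_j^{\prime}\|^2 - 2\langle \bx_i^{\prime}, \by_j^{\prime}\rangle .
\]
Because $r$ and $\|\by_j^{\prime}\|$ are fixed, minimising the distance is equivalent to maximising $\langle \bx_i^{\prime}, \by_j^{\prime}\rangle$ subject to $\|\bx_i^{\prime}\| = r$ and the angular constraint $\angle(\bx_i^{\prime}, \bx_i^0) \leqslant \beta$. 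The first fact I would record is that the only critical points of this functional on the full sphere are the radial points $\pm r\,\by_j^{\prime}/\|\by_j^{\prime}\|$ (by Lagrange, a critical point forces $\by_j^{\prime} \parallel \bx_i^{\prime}$), the positive one being the unconstrained maximiser.

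For the branch $\alpha \leqslant \beta$, I would note that the unconstrained maximiser $r\,\by_j^{\prime}/\|\by_j^{\prime}\|$ makes angle exactly $\alpha$ with $\bx_i^0$ and is therefore feasible. Substituting $\langle \bx_i^{\prime}, \by_j^{\prime}\rangle = r\|\by_j^{\prime}\|$ back into the displayed identity collapses it to $(r - \|\by_j^{\prime}\|)^2$, giving the distance $\big\lvert \|\bx_i\| - \|\by_j - \bt_0\|\big\rvert$, which is precisely the radial-projection value of the first case.

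For the branch $\alpha > \beta$, the unconstrained maximiser is infeasible, so by the critical-point observation the constrained maximum cannot be attained in the relative interior of the cap and must lie on the boundary circle $\{\bx_i^{\prime} : \|\bx_i^{\prime}\| = r,\ \angle(\bx_i^{\prime},\bx_i^0) = \beta\}$. Here I would decompose $\by_j^{\prime}$ inside the plane $P = \operatorname{span}\{\bx_i^0, \by_j^{\prime}\}$ and parametrise the circle by its azimuth about the axis $\bx_i^0$. The functional $\langle \bx_i^{\prime}, \by_j^{\prime}\rangle$ then reduces to $r\|\by_j^{\prime}\|\big(\cos\beta\cos\alpha + \sin\beta\sin\alpha\,\cos\theta\big)$, where $\theta$ is the azimuth relative to the $P$-direction of $\by_j^{\prime}$. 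Since the coefficient $r\|\by_j^{\prime}\|\sin\beta\sin\alpha$ is nonnegative for $\alpha,\beta \in [0,\pi]$, the maximum is unique and occurs at $\theta = 0$, i.e.\ when $\bx_i^{\prime}$ lies in $P$ on the same side as $\by_j^{\prime}$; that point is by definition $\bx_{ij}^*$, yielding the distance $\|\bx_{ij}^* - (\by_j - \bt_0)\|$ of the second case.

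The main obstacle is this last step: rigorously excluding every other boundary point as a maximiser. The azimuthal parametrisation settles it, because the sign of the azimuthal coefficient is fixed, but care is needed with the degenerate configurations. When $\by_j^{\prime} = \mathbf{0}$ the distance is $r$ for all cap points and matches the first branch; when $\by_j^{\prime}$ is parallel to $\bx_i^0$ we have $\alpha = 0 \leqslant \beta$ and the first branch applies trivially; and when $\by_j^{\prime}$ is antiparallel the plane $P$ is not unique, but rotational symmetry makes all boundary points equidistant so the value is still well defined. I would dispatch these cases separately by continuity and symmetry, leaving the generic argument above as the core of the proof.
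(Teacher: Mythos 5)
Your proposal reaches the correct result by a genuinely different route from the paper. The paper's proof parametrises the \emph{entire} cap with two angles via a double application of Rodrigues' formula: an arbitrary cap point is $\bx^0$ rotated towards $\by^{\prime}$ by a polar angle $\gamma \in [0,\beta]$ and then about the axis $\bx^0$ by an azimuth $\theta$. Expanding the squared distance gives the coefficient $\cos\alpha\cos\gamma + \sin\alpha\sin\gamma\cos\theta$, which is optimised sequentially --- first $\theta = 0$, then $\gamma = \min(\alpha,\beta)$ --- so both cases of the theorem fall out of one formula. You instead convert the problem to maximising the linear functional $\langle \bx_i^{\prime}, \by_j^{\prime}\rangle$ over the cap, dispose of the interior by a Lagrange critical-point argument, and parametrise only the boundary circle; your azimuthal expression $r\|\by_j^{\prime}\|(\cos\beta\cos\alpha + \sin\beta\sin\alpha\cos\theta)$ is exactly the paper's coefficient with $\gamma$ frozen at $\beta$. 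The trigonometric core is therefore shared, but your argument cleanly separates \emph{where} the optimum can lie from \emph{what} its value is, and in the branch $\alpha \leqslant \beta$ the feasibility of the unconstrained maximiser settles the case with no computation at all, whereas the paper's two-parameter sweep is what its appendix elaborates.

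One inference needs patching. In the branch $\alpha > \beta$ you argue that, because the unconstrained maximiser $r\by_j^{\prime}/\|\by_j^{\prime}\|$ is infeasible, ``the constrained maximum cannot be attained in the relative interior of the cap.'' But your own critical-point observation lists \emph{two} critical points, $\pm r\by_j^{\prime}/\|\by_j^{\prime}\|$, and the antipodal one is feasible --- it lies strictly inside the cap --- whenever $\pi - \alpha < \beta$, which is perfectly compatible with $\alpha > \beta$ (e.g.\ $\alpha = 170^{\circ}$, $\beta = 20^{\circ}$). So infeasibility of the positive critical point alone does not force the maximum onto the boundary. The fix is one line: the functional's value at $-r\by_j^{\prime}/\|\by_j^{\prime}\|$ is $-r\|\by_j^{\prime}\|$, its global minimum over the whole sphere, so it could only be the maximum over the cap if the functional were constant there, i.e.\ if the cap degenerates to that single point ($\beta = 0$, $\alpha = \pi$), where the claimed formula holds trivially; equivalently, compare with the boundary value $r\|\by_j^{\prime}\|\cos(\alpha-\beta) > -r\|\by_j^{\prime}\|$. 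With the antipodal critical point excluded, your interior/boundary dichotomy and the remainder of the argument, including the degenerate configurations you already treat, go through.
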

\begin{proof}
Dropping the subscripts and translating everything by $(-\bt_0)$, an arbitrary point $\bx^{\prime}$ on the spherical cap can be expressed as the rotation of the point $\bx^{0}$ about the sphere centre towards $\by^{\prime}$ by an angle $\gamma \in [0,\beta]$, followed by a rotation of this intermediate vector about the axis $\bx^{0}$ by $\theta$. By two applications of the Rodrigues' rotation formula,
\begin{equation}
\label{eqn:arbitrary_vector}
\bx^{\prime} = \left[ \shortcos_{\gamma} - \frac{\shortcos_{\alpha} \shortsin_{\gamma} \shortcos_{\theta}}{\shortsin_{\alpha}} \right] \bx^{0} + \frac{\shortsin_{\gamma}}{\shortsin_{\alpha}} \frac{\bx^0 \times \by^{\prime}}{\|\by^{\prime}\|} + \frac{\shortsin_{\gamma} \shortcos_{\theta} \|\bx\|}{\shortsin_{\alpha}\|\by^{\prime}\|} \by^{\prime}
\end{equation}
where $\shortsin_{A} = \sin A$ and $\shortcos_{A} = \cos A$. By substitution, the squared distance between the point $\by^{\prime}$ and an arbitrary point on the spherical cap is given by
\begin{equation}
\label{eqn:spherical_cap_distance_proof}
\|\bx^{\prime} \!- \by^{\prime} \|^2 \!=\! \|\bx\|^2 \!+ \|\by^{\prime}\|^2 \!- 2 \left[ \shortcos_{\alpha} \! \shortcos_{\gamma} \!+ \shortsin_{\alpha} \! \shortsin_{\gamma} \! \shortcos_{\theta} \right] \!\|\bx\|\|\by^{\prime}\|
\end{equation}
and is minimised when $\theta = 0$. Therefore,
\begin{equation}
\label{eqn:spherical_cap_distance_proof_minimum}
\min \|\bx^{\prime} \!- \by^{\prime} \|^2 \!= \|\bx\|^2 \!+ \|\by^{\prime}\|^2 \!- 2 \cos[\alpha - \gamma] \|\bx\|\|\by^{\prime}\|.
\end{equation}
When $\alpha \leqslant \beta$ (Case 1), (\ref{eqn:spherical_cap_distance_proof_minimum}) is minimised when $\gamma = \alpha$:
\begin{align}
\phantom{\therefore} \min \|\bx^{\prime} - \by^{\prime} \|^2 &= \left(\|\bx\| - \|\by^{\prime}\| \right)^2
\label{eqn:spherical_cap_distance_proof_minimum_case1_1}\\
\therefore \min \|\bx^{\prime} - \by^{\prime} \|\phantom{^2} &= \big\lvert\|\bx\| - \|\by^{\prime}\| \big\rvert \text{ for } \alpha \leqslant \beta.
\label{eqn:spherical_cap_distance_proof_minimum_case1_2}
\end{align}
When $\alpha > \beta$ (Case 2), (\ref{eqn:spherical_cap_distance_proof_minimum}) is minimised when $\gamma = \beta$:
\begin{align}
\phantom{\therefore} \min \|\bx^{\prime} - \by^{\prime} \|^2 &= \|\bx^{*} - \by^{\prime}\|^2
\label{eqn:spherical_cap_distance_proof_minimum_case2_1}\\
\therefore \min \|\bx^{\prime} - \by^{\prime} \|\phantom{^2} &= \|\bx^{*} - \by^{\prime}\| \text{ for } \alpha > \beta.
\label{eqn:spherical_cap_distance_proof_minimum_case2_2}
\end{align}
The full proof is left for the appendix.
\end{proof}

The geometric intuition, shown in Figure~\ref{fig:bounds}, for Theorems~\ref{thm:pairwise_lower_bound_improved} and \ref{thm:spherical_cap_distance} is that the minimum distance to the spherical cap is equal to (i) the radial distance to the sphere if $\by$ lies within the rotation cone
$(\alpha \leqslant \beta)$
or (ii) the distance to the edge of the cap.
This `spherical cap distance' pairwise lower bound is a tighter bound than (\ref{eqn:pairwise_lower_bound}) and the gap between it and the upper bound converges to zero. This can be shown by observing that the size of the spherical cap diminishes with the size of the rotation sub-cube~(\ref{eqn:maximum_aperture_angle}) and likewise for the translation sphere and translation sub-cube~(\ref{eqn:translation_uncertainty}).
It is also a tighter bound than that in \cite{yang2013goicp}, which uses the distance to a sphere enclosing the spherical cap.

\begin{figure}[!t]
\centering
\begin{subfigure}[]{0.82\columnwidth}
\def\svgwidth{\columnwidth}
\input{./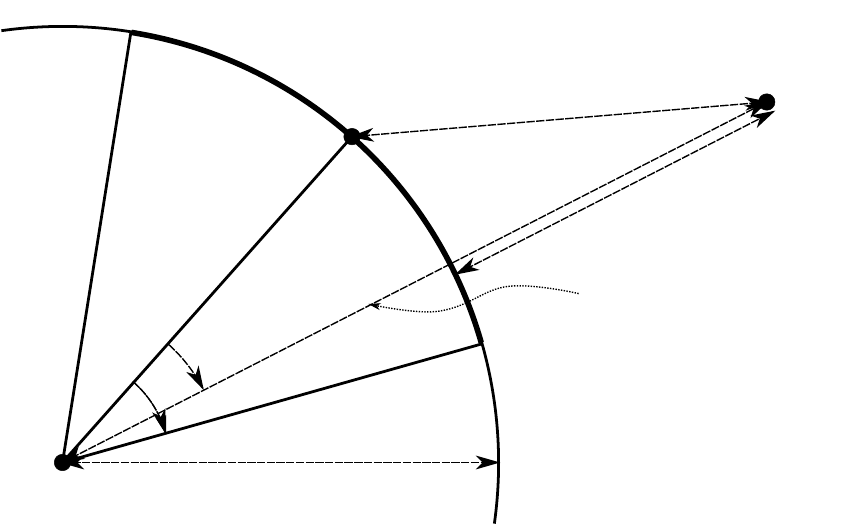_tex}
\caption{Case 1: $\by_j$ is within the rotation cone $(\alpha \leqslant \beta)$.}
\label{fig:bounds_case1}
\end{subfigure}

\begin{subfigure}[]{0.82\columnwidth}
\def\svgwidth{\columnwidth}
\input{./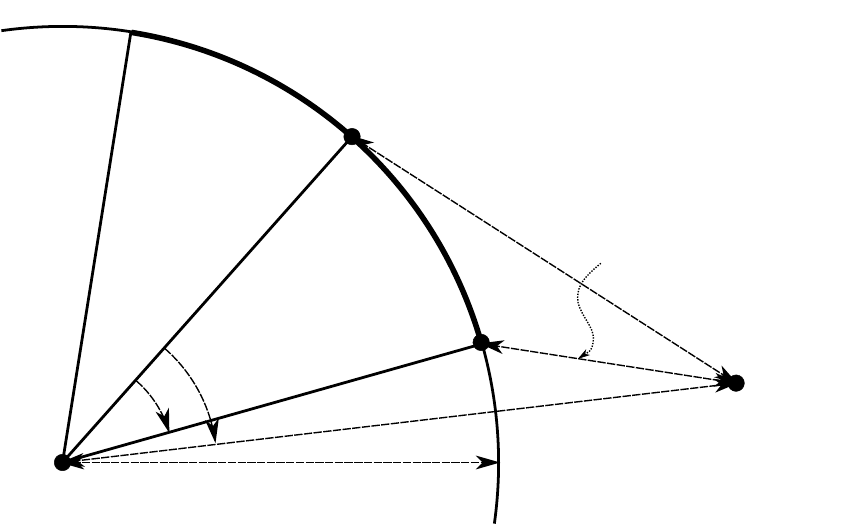_tex}
\caption{Case 2: $\by_j$ is outside the rotation cone $(\alpha > \beta)$.}
\label{fig:bounds_case2}
\end{subfigure}
\caption{Upper and lower bound of the pairwise residual error, neglecting translation. A 2D cross-section of the plane defined by points $\{\mathbf{R}_{\mathbf{r}_{0}} \mathbf{x}_i + \mathbf{t}_0, \mathbf{y}_j, \mathbf{t}_0\}$ is shown.
}
\label{fig:bounds}
\end{figure}

The bounds of the objective function are found by summing the kernelised upper and lower bounds of the pairwise residual errors in (\ref{eqn:pairwise_upper_bound}) and (\ref{eqn:pairwise_lower_bound_improved}) for all $m \times n$ Gaussian pairs.
\begin{corollary}
\label{crl:objective_function_bounds}
(Bounds of the objective function)
For the 3D transformation domain $C_r\times C_t$ centred at $(\br_0,\bt_0)$ with translation sphere radius $\rho$, the upper bound $\bar{f}$ and the lower bound $\ubar{f}$ of the optimal objective function value $f(\bR_\br,\bt)$ can be chosen as
\begin{align}
\bar{f}\left(\bR_{\br}, \bt \right) \!&=\! -\!\! \sum_{i = 1}^{m} \sum_{j = 1}^{n} \!\frac{\phi_{i}^{\mathcal{X}} \phi_{j}^{\mathcal{Y}}}{Z} \exp \!\! \left[ \!-\frac{\left[\bar{e}_{ij}\left(\bR_{\br}, \bt\right) \right]^2}{2 \left[ \sigma_{\scriptscriptstyle i\mathcal{X}}^2 + \sigma_{\scriptscriptstyle j \mathcal{Y}}^2 \right]} \!\right]
\label{eqn:objective_function_bounds_upper}\\
\ubar{f}\left(\bR_{\br}, \bt \right) \!&=\! -\!\! \sum_{i = 1}^{m} \sum_{j = 1}^{n}  \!\frac{\phi_{i}^{\mathcal{X}} \phi_{j}^{\mathcal{Y}}}{Z} \exp \!\! \left[ \!-\frac{\left[\ubar{e}_{ij}\left(\bR_{\br}, \bt\right) \right]^2}{2 \left[ \sigma_{\scriptscriptstyle i\mathcal{X}}^2 + \sigma_{\scriptscriptstyle j\mathcal{Y}}^2 \right]} \!\right] \!\!.
\label{eqn:objective_function_bounds_lower}
\end{align}
\end{corollary}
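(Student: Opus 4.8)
The plan is to reduce the corollary to the pairwise bounds already established in Theorems~\ref{thm:bounds_pairwise} and~\ref{thm:pairwise_lower_bound_improved}, by exploiting the monotone structure of the objective~(\ref{eqn:objective_function}) as a function of the pairwise residuals. First I would write $f(\bR_\br,\bt) = \sum_{i,j} g_{ij}\big(e_{ij}(\bR_\br,\bt)\big)$, where each summand is $g_{ij}(e) = -\frac{\phi_i^{\mathcal{X}}\phi_j^{\mathcal{Y}}}{Z}\exp[-e^2/(2(\sigma_{i\mathcal{X}}^2+\sigma_{j\mathcal{Y}}^2))]$. The single observation that drives everything is that $g_{ij}$ is monotonically non-decreasing in $e$ for $e \geqslant 0$: the mixture weights $\phi_i^{\mathcal{X}}, \phi_j^{\mathcal{Y}}$ and the normaliser $Z$ are non-negative, the variances are positive, and $\exp[-e^2/(2\sigma^2)]$ decreases as $e$ grows, so the leading minus sign makes $g_{ij}$ increase with $e$ on the non-negative half-line. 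Both the residual $e_{ij}$ (a norm) and the lower bound $\ubar{e}_{ij}$ (a max with $0$) always lie in this range, so the monotonicity applies on the correct domain.

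For the lower bound, I would use the fact that $\ubar{e}_{ij}$ from~(\ref{eqn:pairwise_lower_bound_improved}) is a \emph{pointwise} lower bound on the residual, i.e., $\ubar{e}_{ij}\leqslant e_{ij}(\bR_\br,\bt)$ for every $(\br,\bt)\in C_r\times C_t$; this is exactly what the chains of inequalities in the proofs of Theorems~\ref{thm:bounds_pairwise} and~\ref{thm:pairwise_lower_bound_improved} establish. Monotonicity then gives $g_{ij}(\ubar{e}_{ij}) \leqslant g_{ij}\big(e_{ij}(\bR_\br,\bt)\big)$ for each pair $(i,j)$. Summing over all $m\times n$ pairs—noting that every term on the right is evaluated at the \emph{same} $(\br,\bt)$—yields $\ubar{f}(\bR_\br,\bt)\leqslant f(\bR_\br,\bt)$ for all $(\br,\bt)$ in the domain, and hence $\ubar{f}\leqslant \min_{(\br,\bt)\in C_r\times C_t} f(\bR_\br,\bt)$, which is the required validity of~(\ref{eqn:objective_function_bounds_lower}).

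For the upper bound, I would observe that $\bar{e}_{ij} = e_{ij}(\bR_{\br_0},\bt_0)$ is the residual evaluated at the cube centre, so substituting it term by term simply reproduces the objective at that single feasible point: $\bar{f}(\bR_\br,\bt) = f(\bR_{\br_0},\bt_0)$. Since $(\br_0,\bt_0)\in C_r\times C_t$, the value of $f$ there cannot fall below the minimum over the domain, giving $\bar{f} = f(\bR_{\br_0},\bt_0)\geqslant \min_{(\br,\bt)\in C_r\times C_t} f(\bR_\br,\bt)$, the required validity of~(\ref{eqn:objective_function_bounds_upper}).

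The step needing the most care—and the only place the argument could go wrong—is recognising that the upper and lower pairwise bounds are valid in two different senses: $\ubar{e}_{ij}$ bounds the residual \emph{uniformly} over the domain, whereas $\bar{e}_{ij}$ is merely the residual at one feasible point. The monotone-summation argument must therefore be deployed only for the lower bound, while the upper bound relies instead on feasibility of the centre $(\br_0,\bt_0)$. Once this distinction is kept straight, the corollary follows immediately from the monotonicity of $g_{ij}$ and the non-negativity of the mixture weights, with no computation beyond invoking the already-proved pairwise bounds.
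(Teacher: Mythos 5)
Your proof is correct and takes essentially the same route as the paper, which obtains the corollary simply by substituting the kernelised pairwise bounds of Theorems~\ref{thm:bounds_pairwise} and~\ref{thm:pairwise_lower_bound_improved} into the sum over all $m \times n$ Gaussian pairs, relying on exactly the monotonicity of each summand in the residual that you make explicit. Your careful distinction between the two senses of bound---$\ubar{e}_{ij}$ holding uniformly over the domain versus $\bar{e}_{ij}$ being the residual at the feasible centre $(\br_0,\bt_0)$---is a faithful spelling-out of what the paper leaves implicit.
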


\section{The GOGMA Algorithm}
\label{sec:gogma}

The  Globally-Optimal Gaussian Mixture Alignment (GOGMA) algorithm is outlined in Algorithm~\ref{alg:gogma}. It employs branch-and-bound with depth-first search using a priority queue where the priority is inverse to the lower bound (Line~\ref{alg:priority_queue}). The algorithm terminates with $\epsilon$-optimality, whereby the difference between the best function value $f^*$ so far and the global lower bound $\ubar{f}$ is less than $\epsilon$ (Line~\ref{alg:stopping_criterion}).

In this implementation, the upper and lower bounds of 4096 sub-cubes are found simultaneously on the GPU (Line~\ref{alg:kernel}).
A higher branching factor can be used, although memory considerations must be taken into account to ensure that the priority queue does not increase much faster than it can be pruned.
A branching factor of 4096 performs well and does not require a high-end GPU. Other than bound calculation, the code is executed entirely on the CPU.

Lines~\ref{alg:gma_initialisation} and \ref{alg:gma} show how the local Gaussian Mixture Alignment (GMA) algorithm is integrated.
Firstly, the best-so-far function value $f^*$ and the associated transformation parameters are initialised using GMA (Line~\ref{alg:gma_initialisation}). Within the main loop, GMA is run whenever the BB algorithm finds a sub-cube $C_i$ that has an upper bound less than the best-so-far function value $f^*$ (Line~\ref{alg:gma_criterion}). GMA is initialised with $(\br_{0i}, \bt_{0i})$, the centre transformation of $C_i$. In this way, BB and GMA collaborate, with GMA quickly converging to the closest local minimum and BB guiding the search into the convergence basins of increasingly lower local minima. Hence, BB jumps the search out of local minima and GMA accelerates convergence by refining $f^*$. Importantly, the faster $f^*$ is refined, the more sub-cubes are discarded, since those with lower bounds higher than $f^*$ are culled (Line~\ref{alg:push_queue}).

The algorithm is designed in such a way that early termination outputs the best-so-far transformation. Hence, if a limit is set on the runtime, a best-guess transformation can be provided for those alignments that exceed the threshold. While $\epsilon$-optimality will not be guaranteed for them, in practise this is often adequate.
In view of this, and to accelerate the removal of redundant sub-cubes, Line~\ref{alg:gma_criterion} may be modified such that GMA is run for every sub-cube of the first subdivision and $f^*$ is updated with the best function value of that batch. We denote this as batch-initialised GOGMA.

\begin{algorithm}
\begin{algorithmic}[1]
\Require mixture models $\mathcal{G}_{\mathcal{X}}$ and $\mathcal{G}_{\mathcal{Y}}$, parametrised by means $\bx$ and $\by$ respectively, variances $\sigma^2$ and mixture weights $\phi$; threshold $\epsilon$; initial transformation hypercube $\mathcal{C} = \mathcal{C}_r \times \mathcal{C}_t$ centred at $(\br_0,\bt_0)$.

\Ensure $\epsilon$-optimal value $f^*$ and corresponding $\br^*$, $\bt^*$.

\State Run GMA: $(f^*, \br^*, \bt^*) \gets \text{GMA}(\br_0, \bt_0)$\label{alg:gma_initialisation}
\State Add hypercube $\mathcal{C}$ to priority queue $Q$
\Loop
\State Remove cube $C$ with lowest lower-bound $\ubar{f}$ from $Q$\label{alg:priority_queue}
\State \textbf{if} $f^* - \ubar{f} \leqslant \epsilon$ \textbf{then} Terminate \label{alg:stopping_criterion}
\State In parallel, evaluate $\bar{f}_i$ and $\ubar{f}_i$ for all sub-cubes of $C$\label{alg:kernel}
\ForAll{sub-cubes $C_i$}
\State \textbf{if} $\bar{f}_i < f^*$ \textbf{then} $(f^*, \br^*, \bt^*) \gets \text{GMA}(\br_{0i}, \bt_{0i})$\label{alg:gma}\label{alg:gma_criterion}
\State \textbf{if} $\ubar{f}_i < f^*$ \textbf{then} Add $C_i$ to queue: $Q \gets C_i$\label{alg:push_queue}
\EndFor
\EndLoop
\end{algorithmic}
\caption{GOGMA: An algorithm for globally-optimal Gaussian mixture alignment in $SE(3)$}
\label{alg:gogma}
\end{algorithm}

\vspace{-5pt}
\section{Experimental Results}
\label{sec:experimental_results}

GOGMA was evaluated using many different point-sets, including 3D data collected in the lab and in the field. In order to test the algorithms across a uniformly-distributed sample of $SO(3)$, the 72 base grid rotations from Incremental Successive Orthogonal Images (ISOI) \cite{yershova2009generating} were used. Translation perturbations were not applied, since centring and scaling the point-sets to $[-1,1]^3$ before conversion to GMM removes these perturbations.
The transformation domain was set to be $[-\pi, \pi]^3 \times [-0.5, 0.5]^3$. This corresponds to a minimum detectable bounding box overlap of \texttildelow$42\%$.

Except where otherwise specified, the convergence threshold was set to $\epsilon = 0.1$, the number of Gaussian components was set to $m,n \approx 50$, batch initialisation was used and the GMMs were Support Vector--parametrised Gaussian Mixtures (SVGMs), whereby an SVM and a mapping are used to efficiently construct an adaptive GMM from point-set data~\cite{campbell2015adaptive}.
SVGMs allow the user to specify the approximate number of components and set equal variances automatically, based on the desired number of components.

Although GOGMA is a general-purpose Gaussian mixture alignment algorithm, the runtime results include the time required for GMM construction, to facilitate comparison with other point-set registration algorithms. All experiments were run on a PC with a 3.7GHz Quad Core CPU with 32GB of RAM and a Nvidia GeForce GTX 980 GPU. The GOGMA code is written in unoptimised C++ and uses the VXL numerics library \cite{vxl2014} for local GMA optimisation.

It is crucial to observe that finding the global optimum does not necessarily imply finding the ground-truth transformation. For GMMs that describe partially overlapping point-sets, there may exist an alignment that produces a smaller function value than the ground-truth alignment. However, the $L_2$ density distance objective function is much less susceptible to partial overlap than others~\cite{campbell2015adaptive}, including the $L_2$ norm closest point error that is minimised by ICP.

\subsection{Optimality}
\label{sec:experimental_results_optimality}

To demonstrate optimality of the algorithm with respect to the objective function, the reconstructed \textsc{dragon}~\cite{curless2014dragon} and \textsc{bunny}~\cite{turk2014bunny} point-sets were aligned with transformed copies of themselves, using the 72 ISOI rotations. Identical point-sets were required to obtain the ground-truth optimal objective function values.
The global optimum was found for all $144$ registrations, with mean separations from the optimal value being $9 \!\times\! 10^{-8}$ and $3 \!\times\! 10^{-7}$ and mean runtimes being $17$s and $14$s for \textsc{dragon} and \textsc{bunny} respectively.
For batch initialisation, the mean runtimes were $33$s and $29$s. The evolution of the global upper and lower bounds is shown in Figure~\ref{fig:bound_evolution}. It can be seen that BB and GMA collaborate to reduce the upper bound: BB guides the search into the convergence basins of increasingly lower local minima and GMA refines the bound by jumping to the nearest local minimum. Discontinuities in the lower bound occur when an entire sub-cube level has been explored.
With batch initialisation, the global minimum is generally captured at the start of the algorithm. The remaining time is spent increasing the lower bound until $\epsilon$-optimality can be guaranteed.
While sometimes slower for simpler datasets or larger $\epsilon$, it usually reduces runtime and is the preferred setting.

\begin{figure}[!t]
\centering
\begin{subfigure}[]{0.495\columnwidth}
\includegraphics[trim=0 3.5pt 0 0pt,  clip=true, width=\columnwidth]{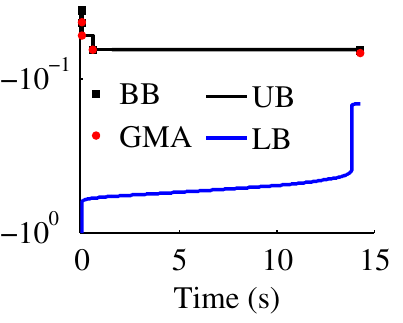}
\caption{\textsc{dragon}}
\label{fig:bound_evolution_dragon}
\end{subfigure}
\hfill
\begin{subfigure}[]{0.495\columnwidth}
\includegraphics[trim=0 3.5pt 0 0pt,  clip=true, width=\columnwidth]{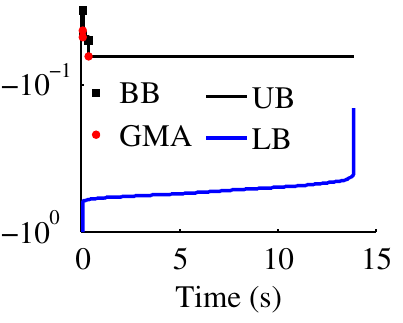}
\caption{\textsc{bunny}}
\label{fig:bound_evolution_bunny}
\end{subfigure}
\caption{Evolution of the upper and lower bounds for the reconstructed \textsc{dragon} and \textsc{bunny} models. The objective function value is plotted against time. Note the logarithmic scale.}
\label{fig:bound_evolution}
\end{figure}

\subsection{Fully-Overlapping Registration}
\label{sec:experimental_results_full_overlap}

In these experiments, we evaluated the performance of GOGMA by aligning single-view partial scans with a full 3D model. We used the \textsc{dragon} dataset, consisting of one reconstructed model (\textsc{dragon-recon}) and 15 partial scans (\textsc{dragon-stand}). The 72 base ISOI rotations were used as the initial transformations for the partial scans.
For the standard parameter settings, GOGMA found the correct alignment in all 1080 cases, as shown in Table~\ref{tab:results_dragon_gmm} (SVGM).

\begin{table}[!t]
\centering
\caption{Effect of GMM type on the accuracy and runtime of the GOGMA algorithm.
The mean/max translation error (in metres), rotation error (in degrees) and runtime (in seconds) are reported.}
\label{tab:results_dragon_gmm}
\newcolumntype{C}{>{\centering\arraybackslash}X}
\begin{tabularx}{\columnwidth}{l C C C}
\hline
GMM Type & SVGM & KDE & EM\\
\hline
Translation & \textbf{0.004/0.008} & 0.14/0.21 & 0.02/0.18\\
Rotation & \textbf{1.5/2.7} & 116/167 & 7.2/80\\
\hline
Runtime & 34/50 & \textbf{15/15} & 4960/4965\\ 
\hline
\end{tabularx}
\end{table}

To investigate the effect of other GMM types on the accuracy and runtime of the algorithm, we repeated the experiment with fixed-bandwidth Kernel Density Estimation (KDE)~\cite{jian2011robust} and Expectation Maximisation (EM)~\cite{dempster1977maximum}.
The number of components was fixed ($m,n = 50$), but the variances and mixture weights were set by the algorithms. For KDE, the variance was found by parameter search and the point-sets were randomly downsampled to $m$ points.
The results were poor, due to the small number of components imposed by GOGMA for tractability.
The EM results show that it is a suitable input to GOGMA in terms of alignment accuracy, however the implementation~\cite{figueiredo2002unsupervised} took $4\,663$s to process the model and \texttildelow$256$s to process each scan, making it impractical unless more efficiently implemented.
Considering both speed and accuracy, SVGMs are recommended.

To investigate the effect of other factors on the runtime, one was varied while the others were kept at the defaults.
The 72 ISOI rotations were applied to scan~0 and the mean runtimes were reported for standard and batch initialisations. The scan, aligned by GOGMA, is shown in Figure~\ref{fig:runtime_scan} in red.
The results for differing numbers of Gaussian components $m,n$ are shown in Figure~\ref{fig:runtime_n}. The quadratic shape reflects the $O(mn)$ per-iteration complexity.
The results for differing values of the convergence threshold $\epsilon$ are shown in Figure~\ref{fig:runtime_epsilon}. For values of $\epsilon$ close to zero, the runtime increases steeply, while larger values allow the algorithm to terminate quicker, albeit with a looser optimality guarantee. We found that $\epsilon \!=\! 0.1$ was a suitable default, having a 100\% success rate.
The runtime is also affected by the quality of the lower bound, as shown in Figure~\ref{fig:runtime_lb}. The GOGMA lower bound is tighter than the Go-ICP lower bound~\cite{yang2013goicp}, which uses the distance to an uncertainty sphere containing the spherical cap, as reflected in the runtimes.

\begin{figure}[!t]
\centering
\begin{subfigure}[]{0.495\columnwidth}
\includegraphics[trim=90pt 80pt 150pt 30pt,  clip=true, width=\columnwidth]{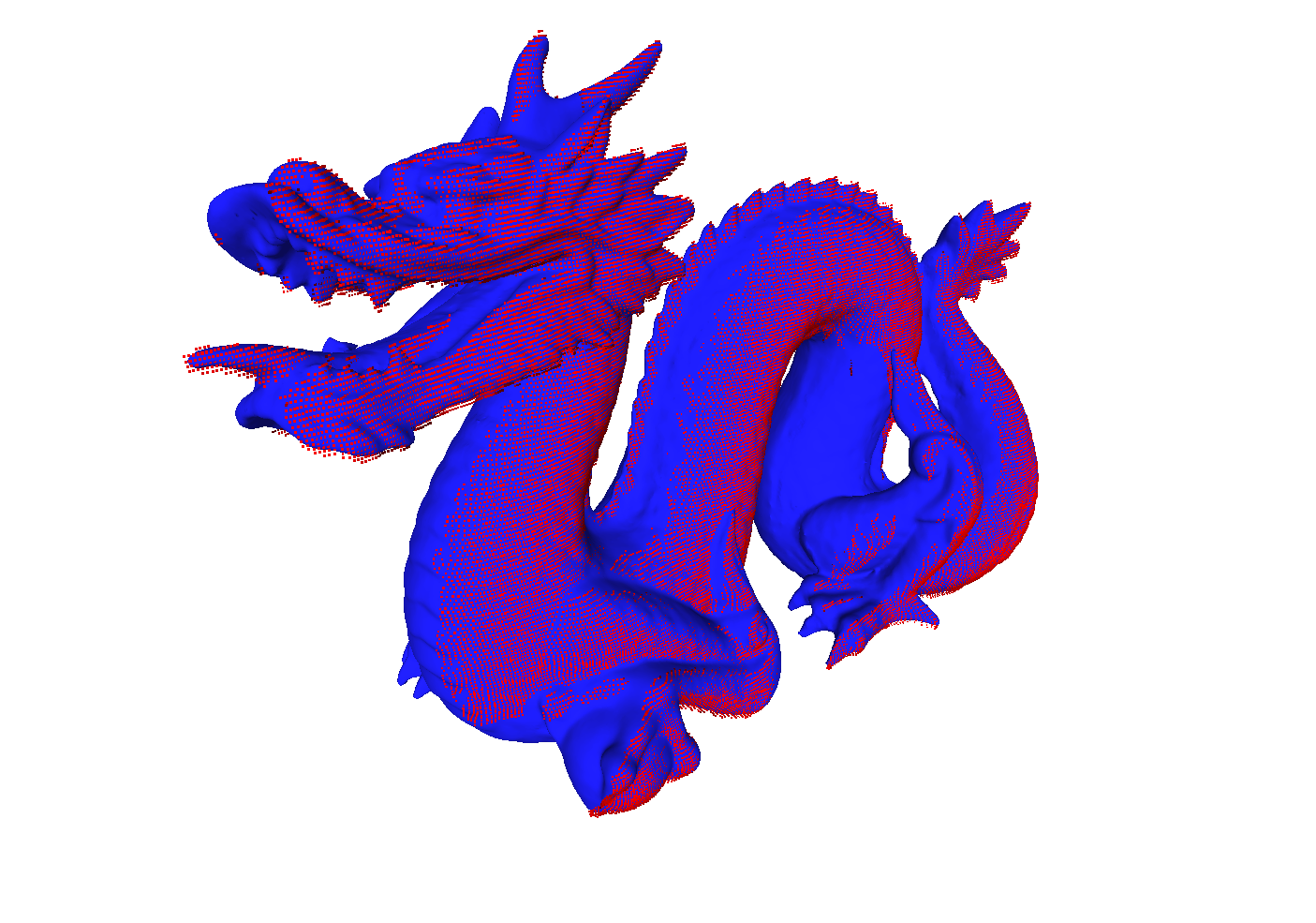}
\caption{Scan (in red) aligned to model}
\label{fig:runtime_scan}
\end{subfigure}
\hfill
\begin{subfigure}[]{0.495\columnwidth}
\includegraphics[trim=0 3pt 0 4pt,  clip=true, width=\columnwidth]{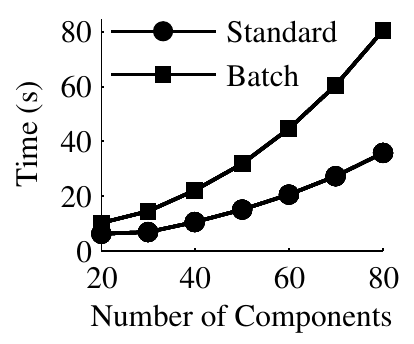}
\caption{Runtime versus $m,n$}
\label{fig:runtime_n}
\end{subfigure}

\begin{subfigure}[]{0.495\columnwidth}
\includegraphics[trim=0 3pt 0 4pt,  clip=true, width=\columnwidth]{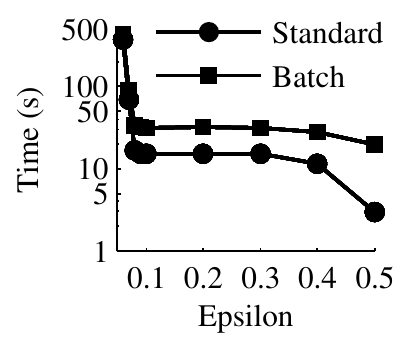}
\caption{Runtime versus $\epsilon$}
\label{fig:runtime_epsilon}
\end{subfigure}
\hfill
\begin{subfigure}[]{0.495\columnwidth}
\includegraphics[trim=0 3pt 0 4pt,  clip=true, width=\columnwidth]{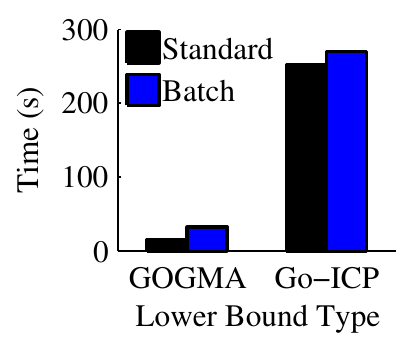}
\caption{Runtime versus lower bound}
\label{fig:runtime_lb}
\end{subfigure}
\caption{Mean runtime of GOGMA on the \textsc{dragon} dataset with respect to different factors, for the alignment of \textsc{dragon-recon} with point-set 0 of \textsc{dragon-stand}, transformed by 72 uniformly distributed rotations. Note the logarithmic scale in (\subref{fig:runtime_epsilon}).}
\label{fig:runtime}
\end{figure}

\subsection{Partially-Overlapping Registration}
\label{sec:experimental_results_partial_overlap}

To evaluate the performance of GOGMA on large-scale field datasets, we used the two challenging laser scanner datasets in Table~\ref{tab:results_large-scale_datasets} \cite{pomerleau2012challenging}. \textsc{stairs} is a structured in/outdoor dataset with large and rapid variations in scanned volumes. \textsc{wood-summer} is an unstructured outdoor dataset with dynamic objects.
The symmetric inlier fraction was used to calculate the overlap: the fraction of points from the joint set within $10\bar{d}$ of a point from the other point-set, where $\bar{d}$ is the mean closest point distance.
Sequential point-sets were aligned using GOGMA with/out refinement, Go-ICP \cite{yang2013goicp}, ICP \cite{besl1992method} and CPD \cite{myronenko2010point} with the 72 ISOI rotations as initial transformations.
GOGMA was refined by running GMA from the output transformation with $m,n \!\approx\! 1000$ components.
The coarse, medium and fine registration inlier rates are defined as the fraction of alignments with translation and rotation errors less than $2$m/$10^{\circ}$, $1$m/$5^{\circ}$ and $0.5$m/$2.5^{\circ}$ respectively.
Tables~\ref{tab:results_stairs} and~\ref{tab:results_wood_summer} and Figure~\ref{fig:results_qualitative} show the results.

\begin{table}[!t]
\centering
\caption{Characteristics of the large-scale field datasets \cite{pomerleau2012challenging}.}
\label{tab:results_large-scale_datasets}
\newcolumntype{C}{>{\centering\arraybackslash}X}
\begin{tabularx}{\columnwidth}{l C C C}
\hline
Dataset & \textsc{stairs} & \textsc{wood-summer}\\
\hline
\# Point-Sets & 31 & 37\\
Mean \# Points & 191\,000 & 182\,000\\
Mean Overlap & 76\% & 77\%\\
\hline
\end{tabularx}
\end{table}

\begin{table}[!t]
\centering
\caption{
Alignment results for \textsc{stairs}. The mean translation error (in metres), rotation error (in degrees), coarse (C), medium (M) and fine (F) registration inlier rates (defined in the text) and mean runtime (in seconds) are reported. GOGMA is denoted by [*], GOGMA with refinement by [*]$_{\text{+}}$, Go-ICP with $\epsilon \!=\!10^{-4}$ by \cite{yang2013goicp}$_{\text{a}}$, Go-ICP with $\epsilon \!=\! 5 \!\times\! 10^{-5}$ by \cite{yang2013goicp}$_{\text{b}}$, ICP by \cite{besl1992method} and CPD by \cite{myronenko2010point}.
}
\label{tab:results_stairs}
\newcolumntype{C}{>{\centering\arraybackslash}X}
\begin{tabularx}{\columnwidth}{l C C C C C C}
\hline
Method & [*] & [*]$_{\text{+}}$ & \cite{yang2013goicp}$_{\text{a}}$ & \cite{yang2013goicp}$_{\text{b}}$ & \cite{besl1992method} & \cite{myronenko2010point}\\
\hline
Translation & 0.26 & \textbf{0.04} & 1.63 & 1.17 & 4.67 & 5.24\\
Rotation & 1.25 & \textbf{0.32} & 30.9 & 19.4 & 107 & 88.8\\
\hline
Inlier \% (C) & \textbf{100} & \textbf{100} & 71.8 & 80.9 & 15.5 & 38.8\\ 
Inlier \% (M) & \textbf{100} & \textbf{100} & 48.5 & 51.9 & 13.4 & 28.6\\ 
Inlier \% (F) & 80.0 & \textbf{99.7} & 19.6 & 21.2 & 6.5 & 7.1\\ 
\hline
Runtime & 49.6 & 71.2 & 31.6 & 103 & \textbf{0.38} & 4.2\\ 
\hline
\end{tabularx}
\end{table}

\begin{table}[!t]
\centering
\caption{Alignment results for \textsc{wood-summer}.
}
\label{tab:results_wood_summer}
\newcolumntype{C}{>{\centering\arraybackslash}X}
\begin{tabularx}{\columnwidth}{l C C C C C C}
\hline
Method & [*] & [*]$_{\text{+}}$ & \cite{yang2013goicp}$_{\text{a}}$ & \cite{yang2013goicp}$_{\text{b}}$ & \cite{besl1992method} & \cite{myronenko2010point}\\
\hline
Translation & 0.72 & \textbf{0.13} & 1.33 & 0.69 & 7.37 & 8.13\\
Rotation & 3.09 & \textbf{0.68} & 9.66 & 5.19 & 109 & 90.7\\
\hline
Inlier \% (C) & \textbf{100} & \textbf{100} & 78.2 & 84.1 & 11.3 & 39.5\\ 
Inlier \% (M) & 75.0 & \textbf{99.9} & 36.6 & 64.5 & 10.8 & 19.3\\ 
Inlier \% (F) & 16.7 & \textbf{99.9} & 13.2 & 27.5 & 5.4 & 0.8\\ 
\hline
Runtime & 29.5 & 49.6 & 26.2 & 77.7 & \textbf{0.44} & 4.2\\ 
\hline
\end{tabularx}
\end{table}

\begin{figure}[!t]
\centering
\begin{subfigure}[]{0.495\columnwidth}
\includegraphics[trim=260pt 150pt 300pt 100pt,  clip=true, width=\columnwidth]{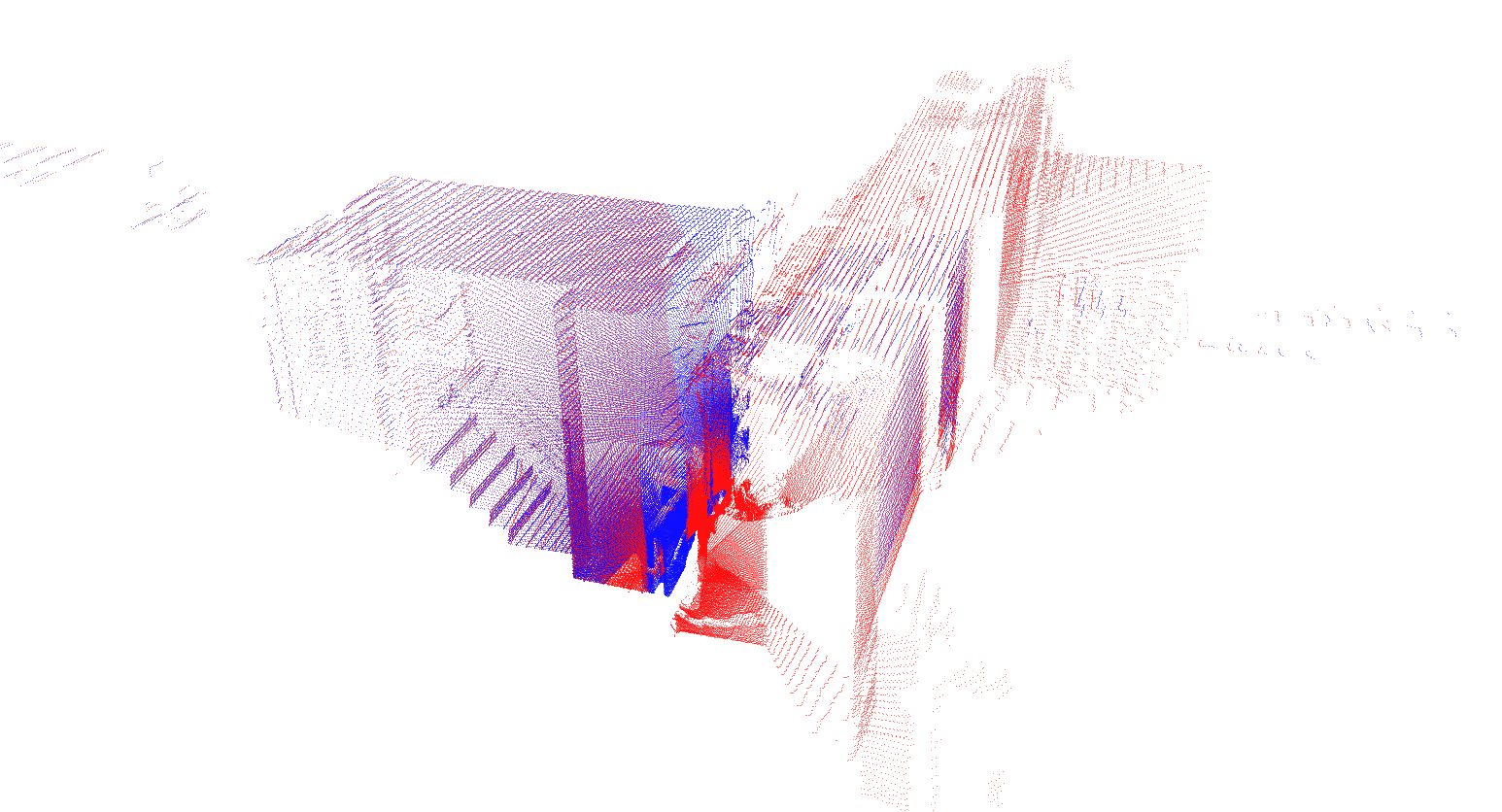}
\caption{\textsc{stairs} point-sets 4 \& 5}
\label{fig:results_qualitative_stairs}
\end{subfigure}
\hfill
\begin{subfigure}[]{0.495\columnwidth}
\includegraphics[trim=340pt 170pt 260pt 100pt,  clip=true, width=\columnwidth]{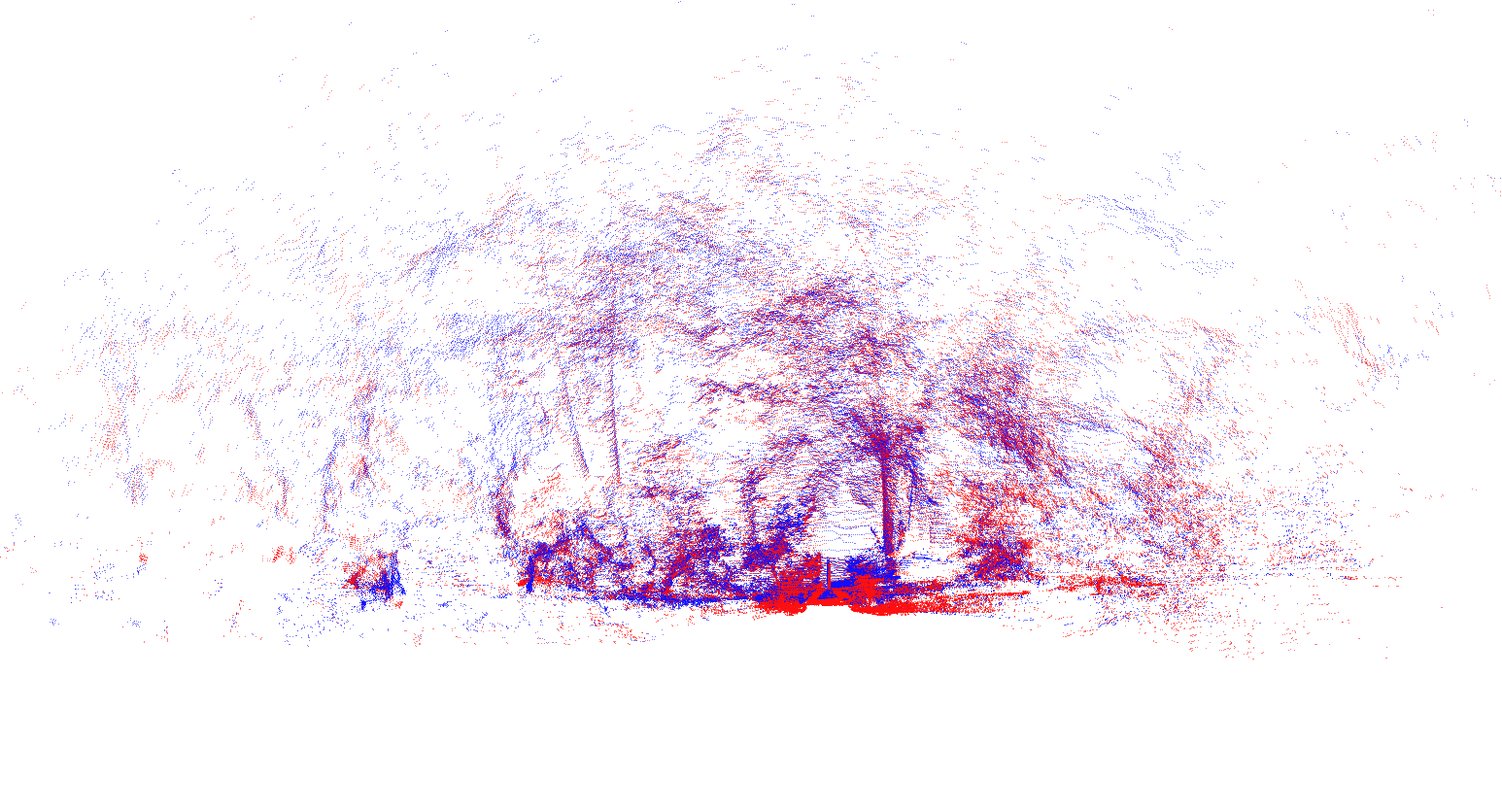}
\caption{\textsc{wood-summer} point-sets 0 \& 1}
\label{fig:results_qualitative_wood_summer}
\end{subfigure}
\caption{Qualitative results for two large-scale datasets. The blue scan was aligned by GOGMA from an arbitrary initial pose against the red scan, followed by GMA refinement. Best viewed in colour.}
\label{fig:results_qualitative}
\end{figure}

GOGMA significantly outperformed Go-ICP in these experiments, finding the correct transformation in all cases under the coarse criterion. Crucially, the success of the refinement step in finding the correct transformation in virtually all cases under the fine criterion indicates that GOGMA found the correct alignment, up to the granularity of the $50$ component representation.
Go-ICP performed poorly with the loose convergence threshold $\epsilon \!=\! 10^{-4}$ and $N \!=\! 50$ points. With $\epsilon$ an order of magnitude smaller ($10^{-5}$), $N$ an order of magnitude greater ($500$), or any trimming, the runtime became prohibitively slow.
The tightest feasible $\epsilon$ ($5 \!\times\! 10^{-5}$) failed to coarsely align $19\%$ of cases for \textsc{stairs} and $16\%$ for \textsc{wood-summer}.
Finally, the results show that ICP and CPD both perform poorly without a good pose prior, converging to local minima for most initialisations.

A specific application is the kidnapped robot problem: finding the pose of a sensor within a 3D map. 
We perturbed 4 scans of different rooms in the \textsc{apartment} dataset \cite{pomerleau2012challenging} by the 72 ISOI rotations to simulate being lost and used GOGMA to localise the scans within the map. As shown in Table~\ref{tab:results_sensor_localisation} and Figure \ref{fig:results_sensor_localisation}, all positions were correctly localised.

\begin{table}[!t]
\centering
\caption{Sensor localisation results for \textsc{apartment}.
The mean translation error (in metres), rotation error (in degrees), runtime (in seconds) and fine (F) registration inlier rates are reported.}
\label{tab:results_sensor_localisation}
\newcolumntype{C}{>{\centering\arraybackslash}X}
\begin{tabularx}{\columnwidth}{l C C C C}
\hline
Room Scan & A & B & C & D\\
\hline
Translation & 0.16 & 0.22 & 0.40 & 0.35\\
Rotation & 0.93 & 0.89 & 1.95 & 2.35\\
Inlier \% (F) & 100 & 100 & 100 & 100\\
\hline
Runtime & 328 & 383 & 379 & 409\\
\hline
\end{tabularx}
\end{table}

\begin{figure}[!t]
\centering
\begin{minipage}[c]{0.22\columnwidth}
\centering
\includegraphics[trim=230pt 60pt 500pt 0pt,  clip=true, width=\columnwidth]{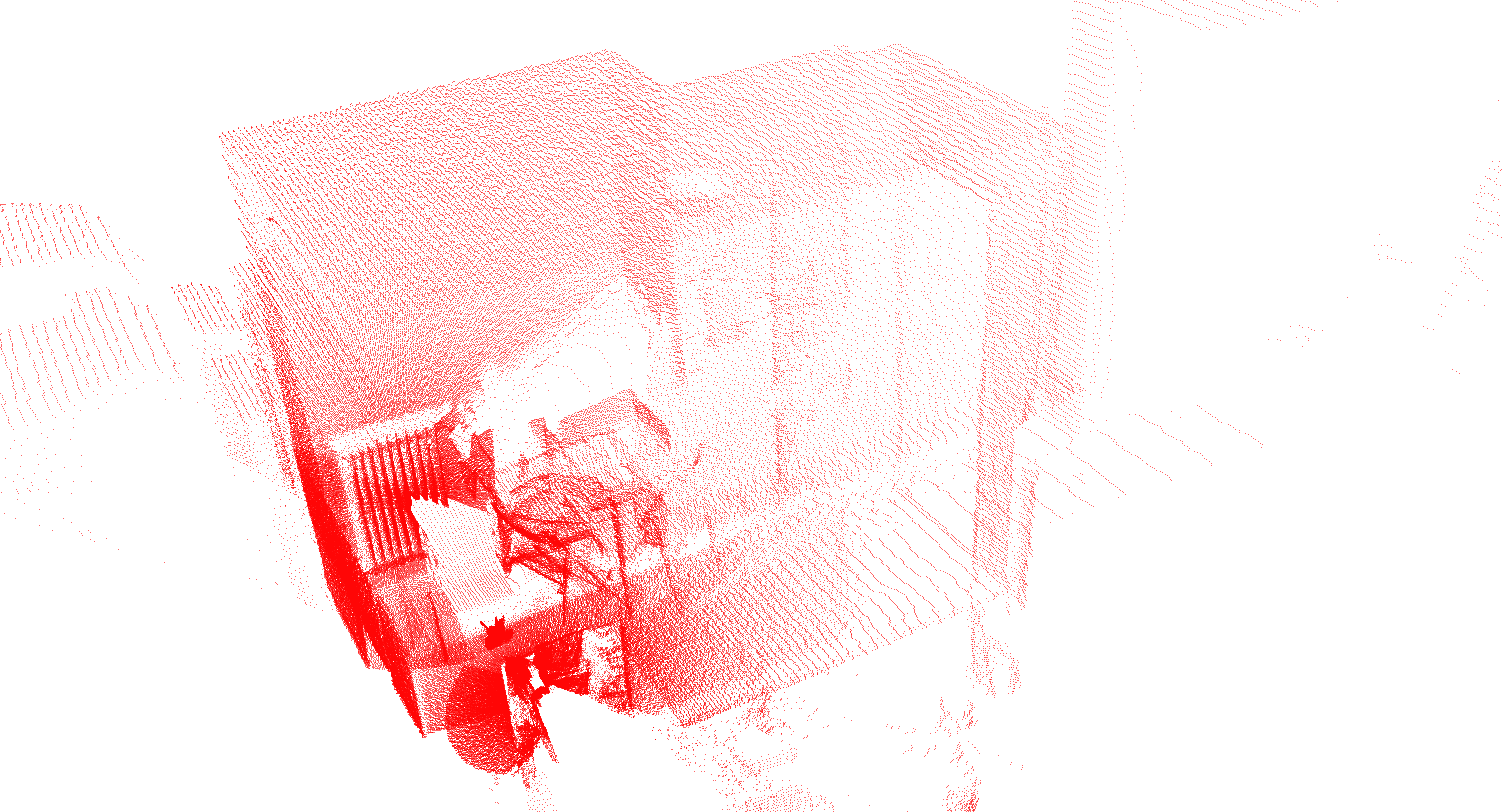}
\par\vfill
\includegraphics[trim=500pt 100pt 360pt 180pt,  clip=true, width=\columnwidth]{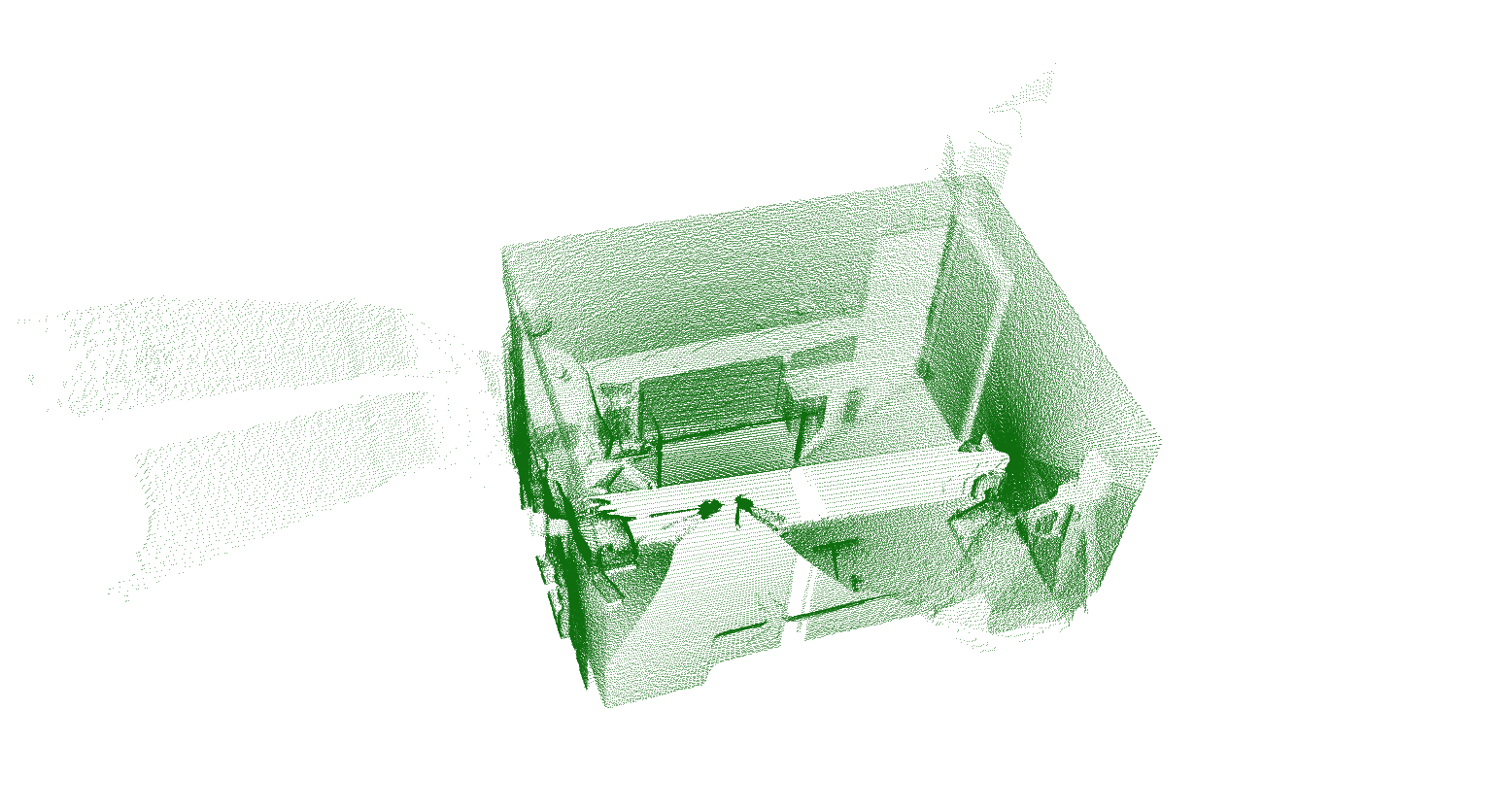}
\end{minipage}
\hfill
\begin{minipage}[c]{0.03\columnwidth}
\centering
\textbf{A}
\vspace{20pt}\\
\textbf{C}
\end{minipage}
\hfill
\begin{minipage}[c]{0.42\columnwidth}
\centering
\includegraphics[trim=300pt 60pt 490pt 100pt,  clip=true, width=\columnwidth]{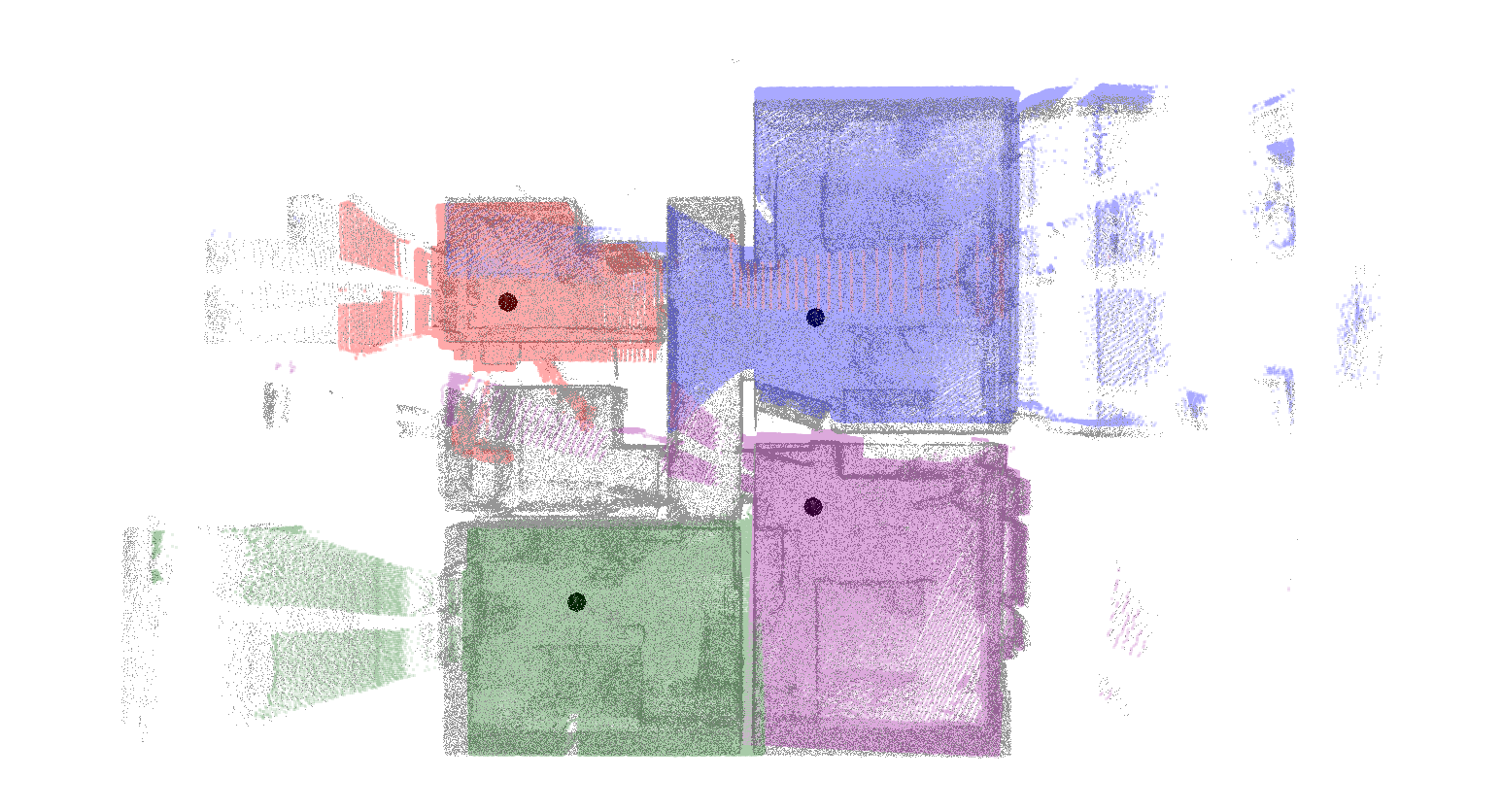}
\end{minipage}
\hfill
\begin{minipage}[c]{0.03\columnwidth}
\centering
\textbf{B}
\vspace{20pt}\\
\textbf{D}
\end{minipage}
\hfill
\begin{minipage}[c]{0.22\columnwidth}
\centering
\includegraphics[trim=400pt 200pt 530pt 80pt,  clip=true, width=\columnwidth]{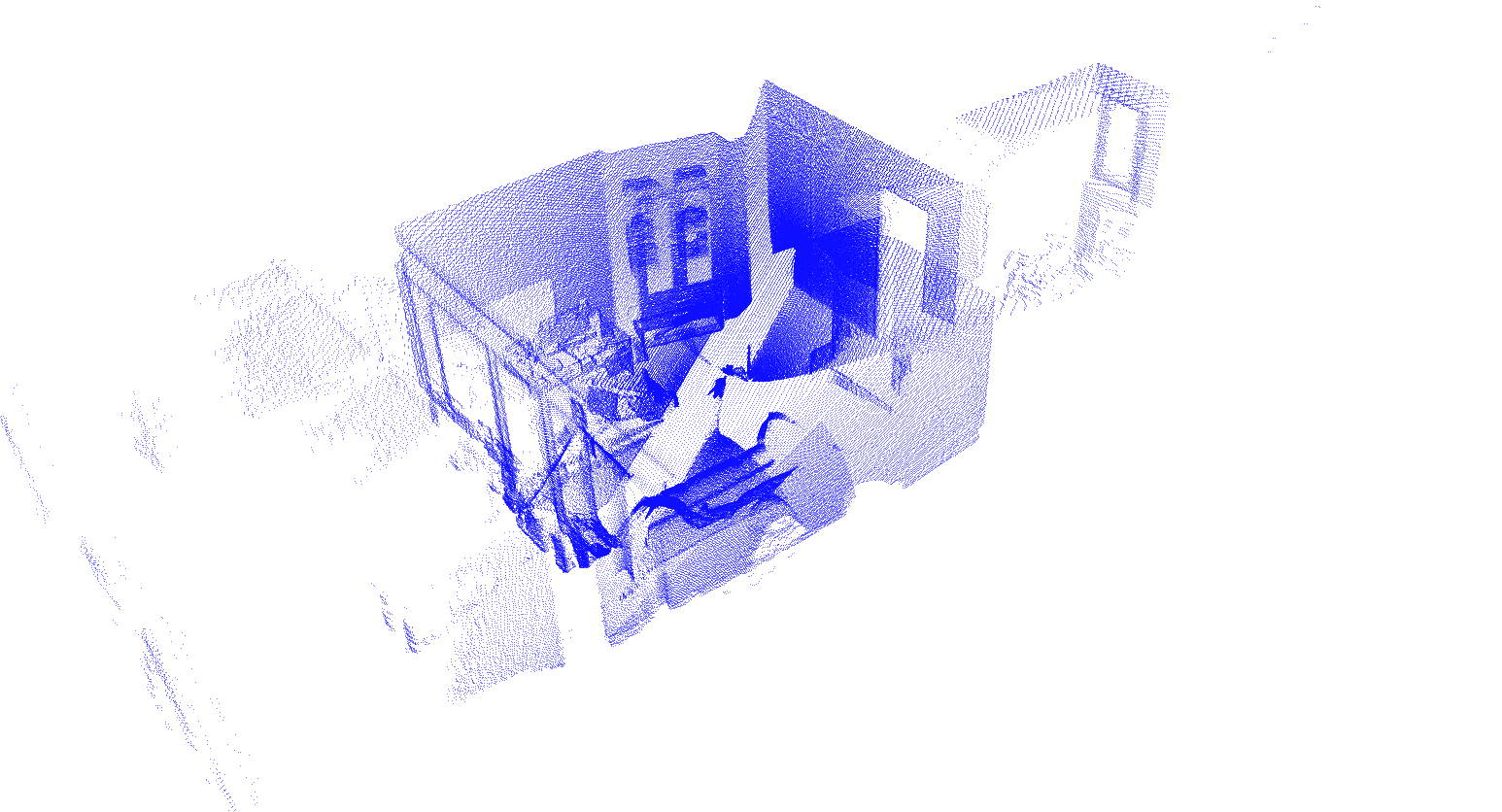}
\par\vfill
\includegraphics[trim=230pt 10pt 420pt 100pt,  clip=true, width=\columnwidth]{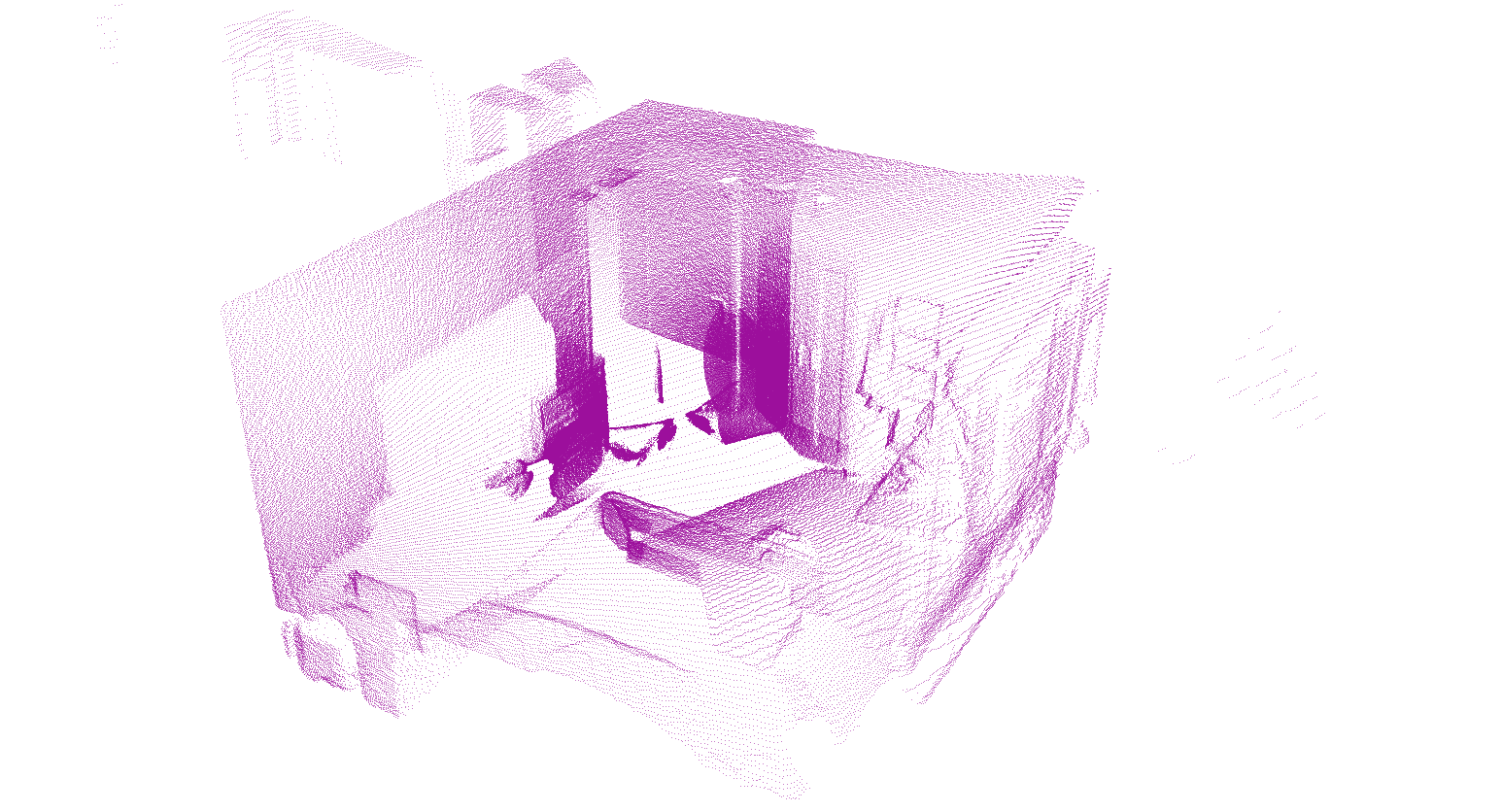}
\end{minipage}
\caption{Coarse pose estimates (black spheres) of the sensor locations for 4 room scans (red, blue, green and purple) found by aligning each scan with the entire map (grey) using GOGMA.}
\label{fig:results_sensor_localisation}
\end{figure}

\section{Conclusion}
\label{sec:conclusion}

In this paper, we have introduced a globally-optimal solution to 3D Gaussian mixture alignment under the $L_2$ distance, with an application to point-set registration. The method applies the branch-and-bound algorithm to guarantee global optimality regardless of initialisation and uses local optimisation to accelerate convergence. The pivotal contribution is the derivation of the objective function bounds using the geometry of $SE(3)$.
The algorithm performed very well on challenging field datasets, due to an objective function that is robust to outliers induced by partial-overlap and occlusion.
%
There are several areas that warrant further investigation.
Firstly, runtime benefits could be realised by implementing the local optimisation on the GPU.
Dynamic branching factors would allow more parallelism for the same memory requirements.
Finally, extending the lower bound to handle full covariances would enable the algorithm to be applied to more general Gaussian mixtures.

{\small
\bibliographystyle{ieee}
\bibliography{campbell2016gogma}
}

\end{document}